\documentclass[sigconf]{acmart}

\AtBeginDocument{%
  \providecommand\BibTeX{{%
    \normalfont B\kern-0.5em{\scshape i\kern-0.25em b}\kern-0.8em\TeX}}}

\setcopyright{acmcopyright}
\copyrightyear{2018}
\acmYear{2018}
\acmDOI{XXXXXXX.XXXXXXX}

\acmConference[Conference acronym 'XX]{Make sure to enter the correct
  conference title from your rights confirmation emai}{June 03--05,
  2018}{Woodstock, NY}
%
%
\acmPrice{15.00}
\acmISBN{978-1-4503-XXXX-X/18/06}

\usepackage{hyperref}

\usepackage{url}
\usepackage{booktabs}       
\usepackage{amsfonts}       
\usepackage{nicefrac}       
\usepackage{microtype}      
\usepackage{xcolor}         
\usepackage{amsmath} 
\usepackage{amsthm,latexsym,paralist}
\usepackage{wrapfig}
\usepackage{graphicx}
\usepackage{subcaption}
\usepackage{multirow}
\usepackage{bm}
\usepackage{xspace}

\usepackage{paralist}

\theoremstyle{plain}
\newtheorem{theorem}{Theorem}
\newenvironment{theorem-again}[1]
  {%
   \addtocounter{theorem}{-1}%
   \begin{theorem}}
  {\end{theorem}}
\newtheorem{proposition}{Proposition}

\theoremstyle{definition}

\newtheorem{assumption}{Assumption}

\newenvironment{assumption-again}[1]
  {%
   \addtocounter{assumption}{-1}%
   \begin{assumption}}
  {\end{assumption}}

\theoremstyle{remark}

\usepackage{amsmath}

\newcommand{\hz}[1]{\textcolor{red}{@HZ:~#1}}

\newcommand{\method}{\textsc{DGat}\xspace}

\usepackage{colortbl}
\definecolor{Gray}{gray}{0.9}

\begin{document}

\title{Investigating Out-of-Distribution Generalization of GNNs:\\ An Architecture Perspective}

%
\author{Kai Guo}
\affiliation{%
  \institution{School of Artificial Intelligence,}
  \institution{Jilin University}
  \country{}
}
\email{guokai20@mails.jlu.edu.cn}

\author{Hongzhi Wen}
\affiliation{%
    \institution{Department of Computer Science and Engineering,}
  \institution{Michigan State University}
  \country{}
}
\email{wenhongz@msu.edu}

\author{Wei Jin}
\affiliation{%
\institution{Department of Computer Science,}
  \institution{Emory University}
  \country{}
}
\email{wei.jin@emory.edu}

\author{Yaming Guo}
\affiliation{%
  \institution{School of Artificial Intelligence,}
  \institution{Jilin University}
  \country{}
}
\email{guoym21@mails.jlu.edu.cn}

\author{Jiliang Tang}
\affiliation{
\institution{Department of Computer Science and Engineering,}
  \institution{Michigan State University}
  \country{}
}
\email{tangjili@msu.edu}

\author{Yi Chang}
\affiliation{%
  \institution{School of Artificial Intelligence,}
  \institution{Jilin University}
  \country{}
}
\email{yichang@jlu.edu.cn}
\renewcommand{\shortauthors}{Trovato and Tobin, et al.}

\begin{abstract}
Graph neural networks (GNNs) have exhibited remarkable performance under the assumption that test data comes from the same distribution of training data. However, in real-world scenarios, this assumption may not always be valid. Consequently, there is a growing focus on exploring the Out-of-Distribution (OOD) problem in the context of graphs. Most existing efforts have primarily concentrated on improving graph OOD generalization from two \textbf{model-agnostic} perspectives: data-driven methods and strategy-based learning. However, there has been limited attention dedicated to investigating the impact of well-known \textbf{GNN model architectures} on graph OOD generalization, which is orthogonal to existing research. In this work, we provide the first comprehensive investigation of OOD generalization on graphs from an architecture perspective, by examining the common building blocks of modern GNNs. Through extensive experiments, we reveal that both the graph self-attention mechanism and the decoupled architecture contribute positively to graph OOD generalization. In contrast, we observe that the linear classification layer tends to compromise graph OOD generalization capability. Furthermore, we provide in-depth theoretical insights and discussions to underpin these discoveries. 
These insights have empowered us to develop a novel GNN backbone model,  \method, designed to harness the robust properties of both graph self-attention mechanism and the decoupled architecture.  Extensive experimental results demonstrate the effectiveness of our model under graph OOD, exhibiting substantial and consistent enhancements across various training strategies.
\end{abstract}

\begin{CCSXML}
<ccs2012>
   <concept>
       <concept_id>10010147.10010257.10010293.10010294</concept_id>
       <concept_desc>Computing methodologies~Neural networks</concept_desc>
       <concept_significance>500</concept_significance>
       </concept>
 </ccs2012>
\end{CCSXML}

\ccsdesc[500]{Computing methodologies~Neural networks}


\keywords{out-of-distribution generalization, graph neural networks, self-attention, decoupled architecture}


\received{20 February 2007}
\received[revised]{12 March 2009}
\received[accepted]{5 June 2009}

\maketitle

\section{Introduction}
Graph Neural Networks (GNNs) have emerged as a powerful tool for representation learning on various graph-structured data, such as social networks~\cite{social1,social2,social3}, citation networks~\cite{kipf,citation1,zhou2021dirichlet}, biological networks~\cite{wen2022graph,wang2021leverage}, and product graphs~\cite{product1,product3}. The representations learned by GNNs can tremendously facilitate diverse downstream tasks, including node classification~\cite{node1,node2,mao2021source}, graph classification~\cite{graph1,sui2022causal}, and link prediction~\cite{link1,link2}. In many of these tasks, it is conventionally assumed that training and test datasets are drawn from an identical distribution. Nonetheless, this assumption is often contravened in practical scenarios~\cite{ood3,eerm}. For instance, for paper classification on citation graphs, models may be trained on papers from a specific timeframe but later be required to make predictions for more recent publications~\cite{eerm}. Such discrepancies between training and test data distributions outline the out-of-distribution (OOD) challenge.


Recent studies on GNNs have pointed out potential vulnerabilities when these models face distributional shifts~\cite{ood1, ood2, ood3, ood4}. To counteract this, existing techniques aim to enhance graph OOD generalization for node classification tasks majorly from two perspectives: data-based and learning-strategy-based methods. Data-based methods focus on manipulating the input graph data to boost OOD generalization. Examples of such strategies include graph data augmentation~\cite{gui2022good,zhao2022graph} and graph transformation~\cite{GTrans}. On the other hand, learning-strategy-based methods emphasize modifying training approaches by introducing specialized optimization objectives and constraints. Notable methods encompass graph invariant learning~\cite{Lisa,eerm} and regularization techniques~\cite{zhu2021shift}. When integrated with existing GNN backbone models~\cite{kipf,velivckovic2017graph,wu2019simplifying,gasteiger2018predict}, these methods can enhance their OOD generalization capabilities.

While these techniques have made strides in addressing graph OOD challenges, they primarily focus on external techniques for improving OOD generalization.
Moreover, there is a phenomenon where different GNN models exhibit varying performance in graph OOD scenarios\cite{GTrans}. \textbf{Hence, there remains a significant gap in our understanding of the inherent OOD generalization capabilities of different GNN backbone architectures. }
This lack of insight raises critical questions: \textit{Which GNN architecture is best suited when dealing with OOD scenarios? Are some models naturally more robust than others?} There is a pressing need to delve deeper into these architectures, comprehensively assess their innate capabilities, and provide clearer guidelines for their deployment in OOD situations.

\vskip 0.3em
\noindent \textbf{Research Questions.} To bridge the gap, this paper presents the first systematic examination on  OOD generalization abilities of popular GNN architectures, specifically targeting the node classification task. Our analysis is framed around three key questions:
\begin{compactenum}[Q1:]
\item How do distinct GNN architectures perform when exposed to OOD data?
\item If there are performance differences as indicated in Q1, can we identify specific designs within GNN architectures responsible for these variations? What could be the underlying causes?
\item Informed by the insights from the previous questions, can we develop new GNN designs that enhance OOD generalization?
\end{compactenum}

\vskip 0.3em
\noindent\textbf{Contributions.} By addressing the above questions, our contributions are summarized as follows:
\begin{compactenum}[\text{A}1:]
\item  We rigorously assess a set of common building modules of GNNs, including attention mechanism, decoupled architecture and linear classification layer.
Our empirical investigation reveals that both the attention mechanism and the decoupled architecture contribute positively to OOD generalization. Conversely, we observe that the linear classification layer tends to impair the OOD generalization capability. 
\item For a deeper understanding, we delve into the reasons why certain building modules enhance OOD generalization and provide corresponding analysis. We demonstrate that the graph self-attention mechanism in graphs adhering to the information bottleneck principle is beneficial for OOD generalization.
\item Based on our findings, we introduce a novel GNN design, Decoupled Graph Attention Network (\method), which combines the attention mechanism with the decoupled architecture, enabling dynamical adjustments of the propagation weights and separating propagation from feature transformation. 
\end{compactenum}
Our major contribution lies in \textbf{the systematic investigation of the impact of GNN architectural modules} on OOD scenarios. Remarkably, \textbf{our study is orthogonal to existing research efforts of model-agnostic solutions}. Indeed, our findings can complement existing external strategies. \method{} achieves superior performance against other GNN backbone models when trained using various OOD algorithms.

\section{Preliminaries}

\subsection{Graph OOD Generalization Problem}

The aim of our study is to investigate the out-of-distribution (OOD) generalization problem in graph domain from an underexplored perspective, the GNN backbone models. As a preliminary, we first introduce the graph OOD problem, and then discuss representative backbones for graph OOD generalization.

The OOD problem originates in the distribution shifts between training and test data. In a supervised learning setting, such distribution shifts can be defined as two types: i.e., covariate shift and concept shift~\cite{gui2022good}.  

In the realm of graph OOD, the input $X$ is specified as a graph $\mathcal{G}=(\mathcal{V}, \mathcal{E})$, with $N$ nodes $v_i \in \mathcal{V}$, edges $\left(v_i, v_j\right) \in \mathcal{E}$, an adjacency matrix $\mathbf{A} \in \mathbb{R}^{N \times N}$. Therefore, the covariate variable consists of an input pair of $(\mathbf{X}, \mathbf{A})$, where $\mathbf{X} \in \mathbb{R}^{N \times d}$ now denotes the node feature matrix. Consequently, graph OOD problems involve distribution shifts with both $\mathbf{X}$ and $\mathbf{A}$, which is more intricate than the general OOD problem. Graph distribution shifts can also be defined as two types: i.e., covariate shift and concept shift. Covariate shift refers to the distribution shift in input variables between training and test data. Formally, $P_{\text {tr}}(\mathbf{X},\mathbf{A}) \neq P_{\text {te}}(\mathbf{X},\mathbf{A})$. On the other hand, concept shift depicts the shift in conditional distribution $P(Y \mid \mathbf{X},\mathbf{A})$ between training and test data, i.e., $P_{\text {tr}}(Y \mid \mathbf{X},\mathbf{A}) \neq P_{\text {te}}(Y \mid \mathbf{X},\mathbf{A})$.

The aim of graph OOD generalization is to bolster model performance in OOD scenarios. To solve an OOD problem, we deploy a GNN backbone model, symbolized as a mapping function $f_\theta({\bf X},{\bf A})$ with learnable parameter $\theta$. The graph OOD generalization problem can be articulated as:
\begin{equation}
\min _\theta \max _{e \in \mathcal{E}} \mathbb{E}_{({\bf X},{\bf A},Y) \sim p({\bf X},{\bf A}, Y \mid e=e)}[\mathcal{L}(f_\theta({\bf X},{\bf A}), Y) \mid e]
\end{equation}
where $\mathcal{L}(\cdot)$ represents a loss function, and $e$ denotes the environment. In the graph machine learning research community, much attention has been dedicated~\cite{eerm, sagawa2019distributionally} to refining loss function $\mathcal{L}$, optimizing $\theta$, or augmenting on $\mathbf{X}$ and $\mathbf{A}$. However, our approach diverges from these methods. Instead, we focus on examining the impact of the design choices in the backbone model $f$. To the best of our knowledge, our study offers the first systematic analysis of GNN backbone architectures' effects in OOD contexts.

\subsection{Graph Neural Network Architectures} \label{sec:pregnn}

Next, we briefly compare the architectures of classic GNN models that are investigated in our analysis in Section~\ref{sec:ana}. 

\textbf{GCN}~\citep{kipf}. The Graph Convolutional Network (GCN) stands as the most representative model, and is chosen as the standard baseline model in our study. A graph convolutional layer comprises a pair of aggregation and transformation operators. At the $l$-th layer, the mathematical representation of the graph convolutional layer is as follows:
\begin{equation}
\mathbf{Z}^{(l)}=\sigma(\hat{\mathbf{A}} \mathbf{Z}^{(l-1)} \mathbf{W}^{(l)}),
\label{eq:GCN}
\end{equation}
where $\mathbf{Z}^{(l)}$ denotes the node representation of $l-th$ layer. $\mathbf{W}^{(l)}$ represents the linear transformation matrix. $\sigma(\cdot)$ is the nonlinear activation function. Note that $\hat{\mathbf{A}}$ is the normalized adjacency matrix, calculated from $\hat{\mathbf{A}} = \tilde{\mathbf{D}}^{-\frac{1}{2}}\tilde{\mathbf{A}}\tilde{\mathbf{D}}^{-\frac{1}{2}}$, where $\tilde{\mathbf{A}}=\mathbf{A}+\mathbf{I}$ is the adjacency matrix with added self-loops, $\tilde{\mathbf{D}}_{i i}=\sum_j \tilde{\mathbf{A}}_{i j}$ is the degree matrix. 

When applying the GCN model, there are two prevalent implementations. The first~\cite{gui2022good} adds a linear prediction layer after the final graph convolutional layer (noted as layer $L$), projecting the dimension of $\mathbf{Z}^{(L)}$ to that of the label $Y$. The second~\cite{kipf}, noted as \textbf{GCN--}, omits this prediction layer, adjusting dimensions directly in the last convolutional layer by altering the dimension of $\mathbf{W}^{(L)}$. While the distinction between these methods might be overlooked, it can influence OOD performance. We have compared both implementations in Section~\ref{sec:ana}, and our findings and analysis are detailed in Section~\ref{sec:lin}.

\textbf{GAT}~\citep{velivckovic2017graph}. In contrast to the fixed aggregation in GCN, Graph Attention Networks (GAT) employ a self-attention mechanism to assign distinct weights to neighboring nodes during aggregation. The attention mechanism can be expressed as:
\begin{equation}\label{eqn: GAT}
\alpha_{i j}=\frac{\exp \left(\sigma\left(\mathbf{a}^T\left[\mathbf{W} \mathbf{Z}_i \| \mathbf{W} \mathbf{Z}_j\right]\right)\right)}{\sum_{k \in \mathcal{N}_i} \exp \left(\sigma\left(\mathbf{a}^T\left[\mathbf{W} \mathbf{Z}_i \| \mathbf{W} \mathbf{Z}_k\right]\right)\right)},
\end{equation}
where $\mathbf{a}$ is a learnable weight vector, $\mathbf{Z}_i$ is the representation vector of node $i$, and $\mathcal{N}_i$
is the neighborhood of node $i$ in the graph.
In light of its connection with GCN, GAT can be studied as an object to investigate the \textbf{attention mechanism} under the controlled variable of GCN. We conduct the experiments in Section~\ref{sec:ana} and our findings are presented in Section~\ref{sec:att}.

\textbf{SGC}~\citep{wu2019simplifying}.
The Simplifying Graph Convolutional Network (SGC) is a decoupled GNN model, where ``\textbf{decoupled}" refers to a GNN model that separates the neural network transformation operators from the propagation (a.k.a, aggregation) operators~\cite{dong2021equivalence}.
Formally, SGC can be defined as: 
\begin{equation}
\label{equ:SGC}
\mathbf{Z}=\hat{\mathbf{A}}^K \mathbf{X} \mathbf{W},
\end{equation}
where $\hat{\mathbf{A}}^K$ can be seen as a composition of $K$ propagation layers, and $\mathbf{W}$ is a simple transformation layer. SGC can serve as the experimental group to study the influence of the \textbf{decoupled architecture} on GCN performance, while a $K$ layer GCN acts as the control group. The results of the control experiment are presented in Section~\ref{sec:appnp}.

\textbf{APPNP}~\citep{gasteiger2018predict}. 
APPNP is another decoupled model, which starts with transformation and subsequently proceeds to propagation. It can be expressed as:
\begin{equation}
\begin{aligned}
\mathbf{Z}^{(0)} & =\mathbf{H}=f_\theta(\mathbf{X}), 
\mathbf{Z}^{(k+1)}  =(1-\beta) \hat{\mathbf{A}} \mathbf{Z}^{(k)}+\beta \mathbf{H}, \\
\mathbf{Z}^{(K)} & =\operatorname{softmax}\left((1-\beta) \hat{\mathbf{A}} \mathbf{Z}^{(K-1)}+\beta \mathbf{H}\right),
\end{aligned}
\end{equation}
where $f_\theta$ represents a composition of multiple transformation layers (i.e., an MLP), $\mathbf{H}$ is the node representation of MLP, and 
$\beta$ specifies the teleport probability of personalized PageRank propagation. Only when $\beta$ is set to $0$, the propagation is equivalent to GCN and SGC. Another noticeable distinction between APPNP and SGC is the order of propagation and transformation layers. Therefore, in Section~\ref{sec:ana}, we have two settings for APPNP, with and without controlling $\beta=0$. These settings aim to distinguish between the effects of \textbf{decoupled architecture} and teleport in propagation. Detailed analyses are provided in Section~\ref{sec:appnp}.

\section{Investigating OOD Generalization of GNN Architectures} \label{sec:ana}
Graph out-of-distribution (OOD) generalization has primarily been addressed through learning-strategy-based and data-based methods. While these model-agnostic approaches provide flexibility, the potential influence of backbone GNN architectures on OOD generalization remains less explored. To delve deeper into this, we initiated a systematic analysis of various GNN architectures in OOD scenarios. When designing the study, we note that GNN architectures comprise multiple optional components, such as attention mechanisms and decoupled propagation. To evaluate their individual impacts, we adopted the most classic GCN model as our baseline and embarked on a series of ablation studies. Our ablation studies focus on three primary modifications: (1) substituting graph convolution with an attention mechanism, (2) decoupling feature transformation and feature propagation, and (3) removing the GCN's linear prediction layer. By controlling external variables, we were able to distinguish the contributions of these individual factors. The following sections provide detailed settings of our experiments and the consequential findings regarding these architectural components.

\subsection{Experimental Setup}

\noindent
\textbf{Evaluation Metric.}  
Our study aims to assess the impact of various backbone architectures on graph OOD generalization. In previous literature, researchers primarily use OOD test performance as their main metric. Yet, models display performance variations both in in-distribution (IID) and OOD settings. This suggests that OOD test performance alone is insufficient for a holistic evaluation of OOD generalization. To address this, we adopt the IID/OOD generalization gap metric from the NLP and CV domains~\cite{hendrycks2020pretrained, zhang2022delving} for graph OOD analysis.
The IID/OOD generalization gap depicts the difference between IID and OOD performance, offering a measure of models' sensitivity and robustness to distribution shifts. It is  defined as: \underline{${\textbf{GAP} = \textbf{IID}_\textbf{test}-\textbf{OOD}_\textbf{test}}$},
where $\text{IID}_\text{test}$ and $\text{OOD}_\text{test}$ are the test performance on IID and OOD test datasets, respectively.


To rigorously determine the component's influence on graph OOD generalization, we performed a paired T-Test between GCN baseline and other models. Evaluating across 10 runs for each model on every dataset, a p-value $\textless 0.05$ indicates a significant difference, with the t-value highlighting the superior model.


\vskip 0.2em
\noindent
\textbf{Dataset.}
We utilized the GOOD benchmark~\cite{gui2022good} for evaluating graph OOD generalization on node classification task. This benchmark offers a unique capability to simultaneously measure both IID and OOD performance, which is a feature not present in prevalent datasets from other studies, e.g., EERM~\cite{eerm}. Such simultaneous measurement is vital as calculating the GAP depends on both performance. The GOOD benchmark comprises citation networks, gamer networks, university webpage networks, and synthetic datasets, and delineates shifts as either covariate or concept shifts. From this collection, we selected 11 datasets, excluding CBAS due to its limited node size and feature dimension, and WebKB-university-covariate as all models exhibited high variance.


\vskip 0.2em
\noindent
\textbf{Implementation of GNN models.}
First, we follow the implementation of GOOD benchmark and use GCN~\cite{kipf} as our baseline model, which concludes with a linear prediction layer. In the GOOD paper~\cite{gui2022good}, models and hyperparameters are selected based on an OOD validation set. In contrast, our study emphasizes the inherent robustness of backbone models to unanticipated distribution shifts common in real-world scenarios. Consequently, we determine optimal hyperparameters using the IID validation set (Appendix~\ref{sec:hypsec3}). 

Next, to establish a comparative analysis framework, we implement other GNN models: GCN--, GAT~\cite{velivckovic2017graph}, SGC~\cite{wu2019simplifying}, and APPNP~\cite{gasteiger2018predict}, as delineated in Section~\ref{sec:pregnn}. Each implementation ensures consistency by maintaining all factors constant, except for the specific design under consideration. For instance, while deploying SGC, we align its hidden dimensions and propagation layers with those of the GCN baseline, even adjusting the transformation layer size to make it slightly different from the original SGC (i.e., the linear transformation in Eqn.~\ref{equ:SGC} is replaced by an MLP). This adjustment isolates the ``decoupled architecture" as the sole variation between SGC and GCN. By comparing these models with GCN, we discern how their distinct architectures influence graph OOD generalization.

In the following, we examine the impact of common GNN building blocks, i.e., \textbf{attention mechanism}, \textbf{coupled/decoupled architecture}, and \textbf{linear prediction layer}, respectively.

\subsection{Impact of Attention Mechanism} \label{sec:att}

\begin{table*}[t]
\caption{OOD and GAP performances for investigating the impact of self-attention.
All numerical results are averages across 10 random runs. \textcolor{red}{Red} color indicates the statistically significant improvement(i.e., $P_{value} \textless 0.05$ and $T_{value} \textgreater 0$) over the GCN. \textcolor{blue}{Blue} color indicates the statistically significant worse(i.e., $P_{value} \textless 0.05$ and $T_{value} \textless 0$) over the GCN. The best performance in each dataset is
highlighted in bold. OOD indicates the OOD performance on OOD test data.}
\vskip -1em
\label{table:GAT}
\resizebox{0.9\textwidth}{!}{%
\begin{tabular}{@{}cc|cc|cc|cc|cc|cc|cc@{}}
\toprule
\multicolumn{1}{l}{} &  & \multicolumn{2}{c}{G-Cora-Word} & \multicolumn{2}{c}{G-Cora-Degree} & \multicolumn{2}{c}{G-Arxiv-Time} & \multicolumn{2}{c}{G-Arxiv-Degree} & \multicolumn{2}{c}{G-Twitch-Language} & \multicolumn{2}{c}{G-WebKB-University} \\ \cmidrule(l){3-14} 
\multicolumn{1}{l}{} &  & OOD$\uparrow$ & GAP$\downarrow$ & OOD$\uparrow$ & GAP$\downarrow$ & OOD$\uparrow$ & GAP$\downarrow$ & OOD$\uparrow$ & GAP$\downarrow$ & OOD$\uparrow$ & GAP$\downarrow$ & OOD$\uparrow$ & GAP$\downarrow$ \\ \midrule
\multirow{2}{*}{Covariate} & GCN & {65.85} & \textbf{5.62} & 56.05 & 18.26 & 70.38 & 2.90 & 59.05 & 19.01 & 52.32 & 22.13 & - & -  \\  
 &\cellcolor{Gray}GAT & \cellcolor{Gray}\textbf{66.23} &\cellcolor{Gray}5.77 &\cellcolor{Gray}\textbf{56.12} &\cellcolor{Gray}\textbf{17.99} &\cellcolor{Gray}\textcolor{red}{\textbf{70.95}} &\cellcolor{Gray}\textbf{2.08} &\cellcolor{Gray}\textcolor{red}{\textbf{59.32}} &\cellcolor{Gray}\textbf{18.78} &\cellcolor{Gray}\textbf{52.83} &\cellcolor{Gray}\textbf{21.17} &\cellcolor{Gray}- &\cellcolor{Gray}- \\ \midrule
\multirow{2}{*}{Concept} & GCN & 65.44 & 3.05 & 62.48 & 7.05 & 62.50 & 13.42 & 60.13 & 16.36 & \textbf{45.12} & \textbf{39.25} & 26.42 & 39.41 \\ 
& \cellcolor{Gray}GAT & \cellcolor{Gray}\textcolor{red}{\textbf{65.86}} &\cellcolor{Gray}\textbf{2.21} &\cellcolor{Gray}\textcolor{red}{\textbf{63.85}} &\cellcolor{Gray}\textbf{4.83} &\cellcolor{Gray}\textcolor{red}{\textbf{64.96}} &\cellcolor{Gray}\textbf{10.89} &\cellcolor{Gray}\textcolor{red}{\textbf{63.07}} &\cellcolor{Gray}\textbf{11.99} &\cellcolor{Gray}\textcolor{blue}{44.14} &\cellcolor{Gray}40.81 &\cellcolor{Gray}\textcolor{red}{\textbf{29.27}} &\cellcolor{Gray}\textbf{35.23} \\ \bottomrule
\end{tabular}%
}
\end{table*}

In Table~\ref{table:GAT}, we assess the impact of attention mechanism by comparing OOD performance of GAT against GCN. GAT surpasses GCN on 10 out of 11 datasets, and shows a lower GAP value on 9 out of 11 datasets. Particularly, on the Arxiv-time-degree dataset, GAT improves OOD performance by $5.1\%$ relative to GCN and decreases the GAP by $36.4\%$, \textbf{underscoring the advantages of attention mechanism for graph OOD generalization}.

To statistically validate these observations, we further applied T-Tests to the OOD results of both models on each dataset. In Table~\ref{table:GAT}, red numbers denote that GAT significantly outperforms GCN, while blue signifies the opposite. The data reveals GAT's significant advantage on 7 datasets, further emphasizing the potency of attention in graph OOD generalization.  The detailed T-Test results are reported in Figure~\ref{fig:GAT_GCN}.(a) in Appendix~\ref{app:t-test}.


\vskip 0.2em
\noindent\textbf{Theoretical Insights.} Next, we present a theoretical analysis that delves into the success of GAT, elucidating why graph attention yields advantages for OOD generalization. Our analysis comprises two key components: (1) We establish a compelling link between the graph attention mechanism and the fundamental concept of information bottleneck; and (2) We demonstrate that optimizing the information bottleneck can notably enhance OOD generalization capabilities. At the start of our analysis, we introduce the concept of the information bottleneck (IB).



We denote the variables of input node features as $X$, and the variables of the output representations as $Z$. Thus, the mapping function of GNN $f_\theta(\cdot)$ can be expressed as $f(Z\mid X)$. Consider a distribution $X\sim \mathcal{N}(X^\prime,\epsilon)$ with $X$ as the noisy input variable, $X^\prime$ as the clean target variable, and $\epsilon$ as the variance for the Gaussian noise. Following \citet{kirsch2020unpacking}, the information bottleneck principle involves minimizing the mutual information between the input $X$ and its latent representation $Z$ while still accurately predicting $X^\prime$ from $Z$, and can be formulated as: 
\begin{equation}
f_{\mathrm{IB}}^*(Z \mid X)=\arg \min\nolimits_{f(Z \mid X)} I(X, Z)-I\left(Z, X^{\prime}\right)
\label{eq:ib}
\end{equation}
where ${I}(\cdot, \cdot)$ stands for the mutual information. With the aforementioned notations and concepts, we now introduce our proposition.
\begin{proposition}\label{pro: IB}
Given a node $i$ with its feature vector $x_i$ and its neighborhood $\mathcal{N}(i)$, the following aggregation scheme for obtaining its hidden representation ${\bf z}_i$, \begin{equation}\label{eqn: SA2IB}
\mathbf{z}_i = \sum_{j\in\mathcal{N}(i)} \frac{\eta_i \exp([\mathbf{W}_K \mathbf{x}_i]^\top \mathbf{W}_Q \mathbf{x}_j)}{\sum_{j\in\mathcal{N}(i)} \exp([\mathbf{W}_K \mathbf{x}_i]^\top \mathbf{W}_Q \mathbf{x}_j)}\mathbf{x}_j,
\end{equation}
with $\eta_i, \mathbf{W}_Q, \mathbf{W}_K$ being the learnable parameters, 
can be understood as the iterative process
to optimize the objective in Eq.~\eqref{eq:ib}. 
\end{proposition}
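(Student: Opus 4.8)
The plan is to connect the self-attention aggregation in Eq.~\eqref{eqn: SA2IB} to one step of an iterative optimization scheme for the IB objective in Eq.~\eqref{eq:ib}, following the variational / EM-style unpacking of the information bottleneck in \citet{kirsch2020unpacking}. First I would write down the Lagrangian form of the IB objective for the node-level problem: treating $\mathbf{z}_i$ as the latent encoding of $\mathbf{x}_i$ under the noisy-observation model $X \sim \mathcal{N}(X', \epsilon)$, the term $I(X,Z)$ is upper-bounded by a KL divergence to a variational marginal, and $-I(Z, X')$ is controlled by a reconstruction (decoding) term. Under a Gaussian assumption on the encoder $f(Z\mid X)$ and on the variational marginal — with the neighborhood features $\{\mathbf{x}_j\}_{j\in\mathcal N(i)}$ playing the role of the codebook / mixture components — the optimal posterior responsibilities take a softmax form $\propto \exp(-\tfrac{1}{2}\|\mathbf{x}_i - \mathbf{x}_j\|^2/\epsilon + \text{const})$, which after absorbing the quadratic terms into the learnable projections becomes $\propto \exp([\mathbf{W}_K\mathbf{x}_i]^\top \mathbf{W}_Q\mathbf{x}_j)$.

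The key steps, in order, would be: (1) instantiate the IB Lagrangian $\mathcal{L}_\beta = I(X,Z) - I(Z,X')$ with the variational bounds, so that it becomes a function of the encoder parameters and a set of cluster centers taken to be the neighbor features; (2) perform a coordinate-descent / alternating-minimization step — fix the decoder and the marginal, minimize over the soft assignment weights, obtaining the Boltzmann/softmax distribution over $\mathcal N(i)$ with "energy" given by a bilinear form in the projected features; (3) perform the complementary step — given the responsibilities, the updated representation $\mathbf{z}_i$ is the responsibility-weighted combination $\sum_j \alpha_{ij}\mathbf{x}_j$, which is exactly Eq.~\eqref{eqn: SA2IB} up to the scalar $\eta_i$; (4) identify $\eta_i$ as the free multiplier / scaling that arises from the trade-off parameter $\beta$ and the noise level $\epsilon$, and $\mathbf{W}_Q,\mathbf{W}_K$ as reparametrizations of the inverse-covariance of the Gaussian encoder. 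Concluding, one iteration of this scheme reproduces the graph self-attention update, so stacking attention layers realizes the iterative optimization of Eq.~\eqref{eq:ib}.

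I expect the main obstacle to be making the correspondence between the \emph{quadratic} exponent $\exp(-\|\mathbf{x}_i-\mathbf{x}_j\|^2/2\epsilon)$ that falls out naturally from a Gaussian IB derivation and the \emph{bilinear} exponent $\exp([\mathbf{W}_K\mathbf{x}_i]^\top \mathbf{W}_Q\mathbf{x}_j)$ that appears in Eq.~\eqref{eqn: SA2IB}: this requires arguing that the $\|\mathbf{x}_i\|^2$ and $\|\mathbf{x}_j\|^2$ self-terms either cancel in the softmax normalization (they do for the $i$-term) or can be folded into the per-node scaling $\eta_i$ and the learnable projections, and that a general (possibly asymmetric, low-rank) bilinear form is recovered by taking $\mathbf{W}_K^\top\mathbf{W}_Q$ in place of a scaled identity — i.e. the attention mechanism is the \emph{learned-metric} generalization of the Gaussian-IB fixed point. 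A secondary subtlety is handling the noisy-input model $X\sim\mathcal N(X',\epsilon)$ carefully enough that the "predict $X'$ from $Z$" term genuinely forces the $-I(Z,X')$ part to be driven by neighbor aggregation (here a homophily / smoothness assumption on the graph, consistent with the IB framing, is what ties the clean signal $X'$ to neighborhood averages). I would keep these two points at the level of assumptions made explicit in the statement rather than grinding through the Gaussian integrals.
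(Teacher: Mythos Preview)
Your proposal is correct and follows essentially the same route as the paper: both cast the IB objective as a Gaussian clustering problem on the neighborhood, derive softmax responsibilities from the KL/Mahalanobis term, and read off the cluster-mean update as the attention aggregation; the paper just uses the explicit IB-for-clustering iterative equations (citing Still, Strouse, and Zhou) rather than your variational/EM phrasing. One correction on the obstacle you flag: the $j$-dependent self-term (in the paper's notation, $[\mu_c]^\top\Sigma^{-1}\mu_c$) cannot be absorbed into the $j$-independent scalar $\eta_i$; the paper instead \emph{assumes} a shared covariance $\Sigma$ and that the $\mu_c$ are normalized with respect to $\Sigma^{-1}$, so this term is constant across $c$ and drops out of the softmax, after which the identifications $\mathbf{W}_Q=\Sigma^{-1}$, $\mathbf{W}_K\mathbf{x}_c=2\mu_c$, and $\eta_c=\log p(c)/(n\det\Sigma)$ complete the match.
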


The detailed proof of the proposition mentioned above is available in Appendix~\ref{app:theory}. Proposition 1 unveils an intriguing relationship between the aggregation scheme in Eq.~\eqref{eqn: SA2IB} and the information bottleneck principle in Eq.~\eqref{eq:ib}: it demonstrates that the information bottleneck principle can be approached by adaptively aggregating similar node features into a learnable representation vector. It is worth noting that the aggregation scheme employed in Graph Attention Networks (GAT) (Eq.~\eqref{eqn: GAT}) can be viewed as a specific instance of Eq.~\eqref{eqn: SA2IB} under certain conditions: (1) GAT sets $\eta_i$ to a constant value; and (2) GAT computes attention using a weight matrix multiplication on the concatenated node pair vector. Given the insights provided by the proposition, it is reasonable to conjecture that GAT shares a similar connection with the information bottleneck principle.

Furthermore, we highlight that the information bottleneck principle plays a pivotal role in enhancing the OOD generalization of neural networks. Notably, \citet{ahuja2021invariance} have substantiated that a form of the information bottleneck constraint effectively addresses critical issues when invariant features completely capture the information about the label and also when they do not, under the context of distribution shifts. Consequently, we postulate that the reason the graph attention mechanism contributes to the OOD generalization of GNNs is intricately tied to its connection with the information bottleneck principle.

\subsection{Impact of Coupled/Decoupled Structure} 
\label{sec:appnp}

\begin{table*}[]
\caption{OOD and GAP performances for investigating the impact of decoupled architecture.
All numerical results are averages across 10 random runs. \textcolor{red}{Red} color indicates the statistically significant improvement(i.e., $P_{value} \textless 0.05$ and $T_{value} \textgreater 0$) over the GCN. \textcolor{blue}{Blue} color indicates the statistically significant worse(i.e., $P_{value} \textless 0.05$ and $T_{value} \textless 0$) over the GCN. The best performance in each dataset is
highlighted in bold.}
\vskip -1em
\label{table:APPNP}
\resizebox{0.9\textwidth}{!}{%
\begin{tabular}{@{}clcc|cc|cc|cc|cc|cc@{}}
\toprule
\multicolumn{1}{l}{} &  & \multicolumn{2}{c}{G-Cora-Word} & \multicolumn{2}{c}{G-Cora-Degree} & \multicolumn{2}{c}{G-Arxiv-Time} & \multicolumn{2}{c}{G-Arxiv-Degree} & \multicolumn{2}{c}{G-Twitch-Language} & \multicolumn{2}{c}{G-WebKB-University} \\ \cmidrule(l){3-14} 
\multicolumn{1}{l}{} &  & OOD$\uparrow$ & GAP$\downarrow$ & OOD$\uparrow$ & GAP$\downarrow$ & OOD$\uparrow$ & GAP$\downarrow$ & OOD$\uparrow$ & GAP$\downarrow$ & OOD$\uparrow$ & GAP$\downarrow$ & OOD$\uparrow$& GAP$\downarrow$\\ \midrule
\multirow{4}{*}{covariate} & GCN & 65.85 & 5.62 & 56.05 & 18.26 & 70.38 & 2.90 & 59.05 & 19.01 & 52.32 & 22.13 & - & - \\ 
 & SGC & 66.19 & 5.61 & 55.43 & 18.46 & 70.54 & \textbf{2.27} & \textcolor{red}{\textbf{59.66}} & \textbf{18.10} & \textcolor{red}{54.03} & 20.40 & - & - \\ 
 & \cellcolor{Gray}APPNP($\beta=0$) &\cellcolor{Gray}\textcolor{red}{66.66} &\cellcolor{Gray}\textbf{5.16} &\cellcolor{Gray}56.14 &\cellcolor{Gray}17.83 &\cellcolor{Gray}\textbf{70.69} &\cellcolor{Gray}2.84 &\cellcolor{Gray}\textcolor{red}{59.33} &\cellcolor{Gray}18.67 &\cellcolor{Gray}\textcolor{blue}{51.66} &\cellcolor{Gray}22.18 &\cellcolor{Gray}- &\cellcolor{Gray}- \\ 
 & \cellcolor{Gray}APPNP &\cellcolor{Gray}\textcolor{red}{\textbf{67.48}} &\cellcolor{Gray}6.37 &\cellcolor{Gray}\textcolor{red}{\textbf{58.33}} &\cellcolor{Gray}\textbf{17.16} &\cellcolor{Gray}\textcolor{blue}{69.22} &\cellcolor{Gray}3.38 &\cellcolor{Gray}\textcolor{blue}{55.40} &\cellcolor{Gray}22.10 &\cellcolor{Gray}\textcolor{red}{\textbf{56.47}} &\cellcolor{Gray}\textbf{16.75} &\cellcolor{Gray}- &\cellcolor{Gray}- \\ \midrule
\multirow{4}{*}{concept} & GCN & 65.44 & 3.05 & 62.48 & 7.05 & 62.50 & 13.42 & 60.13 & 16.36 & 45.12 & 39.25 & 26.42 & 39.41 \\ 
 & SGC & 65.21 & 3.16 & 62.41 & 6.77 & \textcolor{red}{63.88} & 11.94 & \textcolor{blue}{56.66} & 20.26 & \textcolor{blue}{44.22} & 40.52 & 25.78 & 40.38 \\ 
 & \cellcolor{Gray}APPNP($\beta=0$) &\cellcolor{Gray}\textcolor{red}{66.21} &\cellcolor{Gray}\textbf{2.81} &\cellcolor{Gray}\textcolor{red}{64.09} &\cellcolor{Gray}\textbf{4.85} &\cellcolor{Gray}\textcolor{red}{\textbf{65.10}} &\cellcolor{Gray}\textbf{10.94} &\cellcolor{Gray}\textcolor{red}{\textbf{65.44}} &\cellcolor{Gray}9.86 &\cellcolor{Gray}\textcolor{blue}{44.10} &\cellcolor{Gray}39.87 &\cellcolor{Gray}\textcolor{red}{29.17}&\cellcolor{Gray}35.83 \\ 
& \cellcolor{Gray}APPNP &\cellcolor{Gray}\textcolor{red}{\textbf{66.46}} &\cellcolor{Gray}4.24 &\cellcolor{Gray}\textcolor{red}{\textbf{64.81}} &\cellcolor{Gray}5.55 &\cellcolor{Gray}\textcolor{red}{63.46} &\cellcolor{Gray}11.51 &\cellcolor{Gray}\textcolor{red}{64.28} &\cellcolor{Gray}\textbf{9.51} &\cellcolor{Gray}\textcolor{red}{\textbf{46.81}} &\cellcolor{Gray}\textbf{35.99} &\cellcolor{Gray}\textcolor{red}{\textbf{30.28}} &\cellcolor{Gray}\textbf{35.22} \\ \bottomrule
\end{tabular}%
}
\end{table*}
\begin{figure*}[t]
    \begin{subfigure}{0.5\textwidth}
        \centering
        \includegraphics[width=\linewidth]{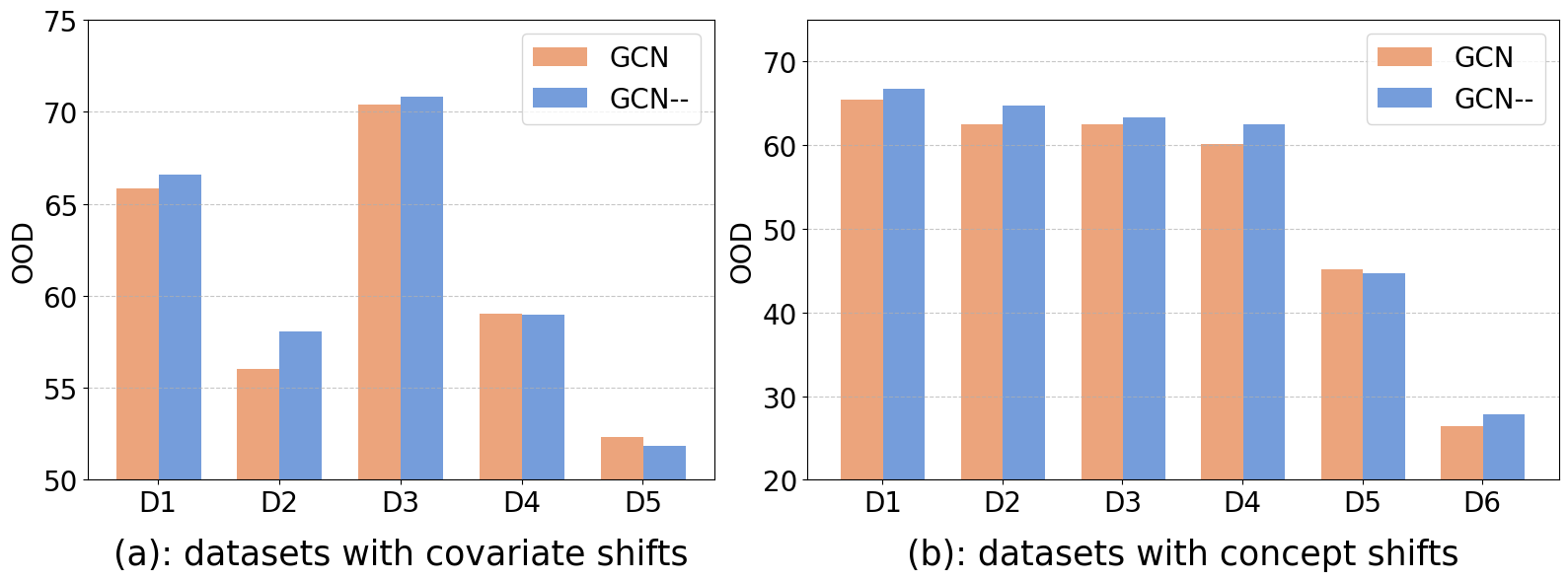}
    \end{subfigure}\hfill
    \begin{subfigure}{0.5\textwidth}
        \centering
        \includegraphics[width=\linewidth]{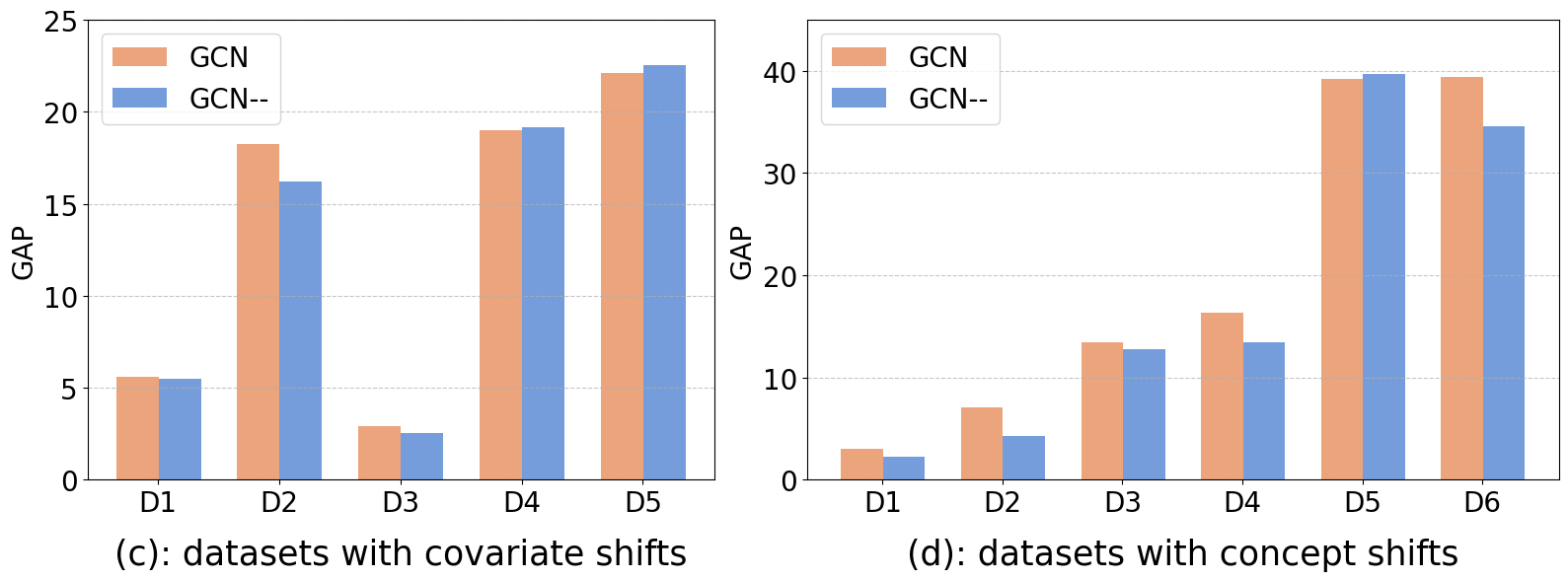}
    \end{subfigure}\hfill
    \vskip -1.2em
    \caption{Comparision of OOD and GAP between GCN and GCN-- for investagating the impact of linear classifier. GCN-- means GCN without linear classifier. D1, D2, D3, D4, D5, D6 represent G-Cora-Word, G-Cora-Degree, G-Arxiv-Time, G-Arxiv-Degree, G-Twitch-Language and G-WebKB-University respectively.}
    \label{fig:GCN--}
    \vspace{-0.25cm}
\end{figure*}


In order to compare coupled and decoupled structures, we evaluate the performance of various decoupled GNNs, presented in Table~\ref{table:APPNP}. 
For both OOD tests and GAP values, SGC surpasses GCN on merely 5 out of 11 datasets. In contrast, when the order of propagation and transformation is reversed, APPNP ($\beta = 0$) exceeds GCN's performance on 9 out of 11 datasets and demonstrates a lower GAP value on 9 out of 11 datasets. This reveals the significance of the transformation-propagation order in OOD generalization, \textbf{suggesting a preference for transformation prior to propagation in graph OOD contexts}. The T-Test results are presented in the same color scheme as in Section~\ref{sec:att}, which further confirms our initial observation.

Such observation can be potentially explained from the theory that decoupled graph neural networks are equivalent to label propagation, proposed by~\citet{dong2021equivalence}. From a label propagation perspective, the models propagate known labels across the graph to generate pseudo-labels for unlabeled nodes. These pseudo-labels then optimize the model predictor. The augmentation with pseudo-labels may curb overfitting and the architecture's training approach can adaptively assign structure-aware weights to these pseudo-labels~\cite{dong2021equivalence}. This might account for the enhanced OOD generalization performance seen in the decoupled architecture.

\subsection{Impact of Linear Prediction Layer} \label{sec:lin}

Lastly, we evaluate the impact of the linear prediction layer on a GCN. As shown in Figure~\ref{fig:GCN--}, GCN-- surpasses GCN on 8 out of 11 OOD datasets and achieves a lower GAP value on 8 out of 11 datasets. This indicates that \textbf{removing the last linear prediction layer can enhance graph OOD performance}. The T-Test results are reported in Figure~\ref{fig:GAT_GCN}.(b) in Appendix~\ref{app:t-test}. 

The performance dip of the linear prediction layer might be attributed to two factors. 
First, introducing an extra linear prediction layer might lead to surplus parameters and higher model complexity, amplifying the overfitting risk on IID. 
Second, the propagation process is non-parametric and has a lower risk of overfitting. It may be advantageous in both IID and OOD contexts. In contrast, the additional linear prediction layer is fit to the training data's label distribution, potentially hindering performance when faced with OOD distribution shifts. By omitting this layer, we amplify the graph's intrinsic structure, leading to improved OOD generalization.

\section{New GNN Design for Enhanced OOD Generalization}
\begin{figure*}     \centering     
\setlength{\abovecaptionskip}{0.cm}     \setlength{\belowcaptionskip}{0.cm}     \includegraphics[scale=0.9]{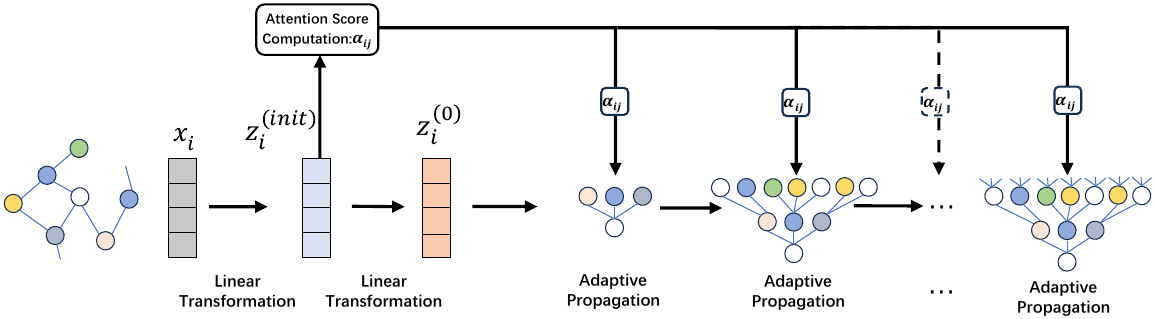}  
\captionsetup{skip=10pt}
\vskip -1em
\caption{An illustration of our proposed model \method{}. In this decoupled architecture, we calculate attention scores from transformed features and employ these scores throughout each propagation layer.}   \label{fig: model} 
\vskip -0.4em
\end{figure*}

In the previous section, we provided both \underline{empirical results} and \underline{theoretical analysis} that show the beneficial roles of the attention mechanism and the decoupled architecture in enhancing the OOD generalization of GNNs. On the other hand, we noted that the linear prediction layer detracts from OOD generalization. \textbf{Motivated by these observations, we propose to merge the attention mechanism with the decoupled architecture, opting to omit the linear prediction layer.} Specifically, we calculate attention scores from transformed features and employ these scores throughout each propagation layer. In the following, we delve into the details of our proposed model, Decoupled Graph Attention Network (\method).


\subsection{A New GNN Design}

Our proposed \method model integrates the principles of decoupled architecture and the attention mechanism, both of which have demonstrated positive contributions to OOD performance in previous sections. To adopt a decoupled architecture, we separate the linear transformation and propagation operations and conduct the linear transformation prior to propagation. Furthermore, we inject the attention mechanism into the propagation operations. As a result, our model includes three components: linear transformation, attention score computation, and adaptive propagation. The framework is illustrated by Figure~\ref{fig: model}.

\vskip 0.3em
\noindent
\textbf{Linear Transformation.} Our \method model decouples the GNN into transformation and propagation. This design can enhance OOD performance as discussed in Section~\ref{sec:appnp}. Therefore, \method first applies two linear transformation layers to the input data.
\begin{equation}
\begin{aligned}
\mathbf{Z}^{(\text{init})} & = \sigma{(\bm{W}^{(\text{init})}  \mathbf{X} + \bm{b}^{(\text{init})})} \quad \in \mathbb{R}^{N \times d}\\
\mathbf{Z}^{(0)} &  = \bm{H} = \bm{W}^{(0)}  \mathbf{Z}^{(\text{init})} + \bm{b}^{(0)} \quad \in \mathbb{R}^{N \times c}
\end{aligned}
\end{equation}
where $d$ is the hidden dimension and $c$ indicates the number of classes. The second linear layer maps features to the label space. These pseudo-labels help mitigate overfitting and dynamically assign structure-aware weights to them. 

\vskip 0.3em
\noindent
\textbf{Attention Score Computation.}
Our experimental findings in Section~\ref{sec:att} highlight the advantages of the attention mechanism for handling graph OOD, supported by the theoretical evidence that the graph self-attention mechanism aligns with the information bottleneck principle. 
To incorporate the attention mechanism into our model, we calculate attention scores based on the node representation $\mathbf{Z}^{(\text{init})}$, formulated as:
\begin{equation}
\alpha_{i j}=\frac{\exp \left(\operatorname{LeakyReLU}\left(\mathbf{a}^T\left[\mathbf{W} \mathbf{{Z}}^{(\text{init})}_i \| \mathbf{W} \mathbf{{Z}}^{(\text{init})}_j\right]\right)\right)}{\sum_{k \in \mathcal{N}_i} \exp \left(\operatorname{LeakyReLU}\left(\mathbf{a}^T\left[\mathbf{W} \mathbf{{Z}}^{(\text{init})}_i \| \mathbf{W} \mathbf{{Z}}^{(\text{init})}_k\right]\right)\right)},
\end{equation}
where $\alpha_{ij}$ represents the attention score. We use $\bm{P}$ to denote the attention matrix, where $\bm{P}_{i j}=\alpha_{ij}$.  We compute $\alpha_{ij}$ using $\mathbf{{Z}}^{(\text{init})}$ instead of $\mathbf{{Z}}^{(\text{0})}$, because $\mathbf{{Z}}^{(\text{0})}$ usually has a low dimensionality which can hamper the identification of important nodes. 

\vskip 0.3em
\noindent
\textbf{Adaptive Propagation.} The last step in \method is propagation.
Instead of a traditional fixed propagation, \method achieves adaptive propagation by combining attention score matrices and adjacency matrices. We define $\hat{\tilde{\mathbf{A}}}:=(1-\gamma)\bm{P}+\gamma \hat{\bm{A}}$. The adaptive propagation function is expressed as follows:
\begin{equation}
\begin{aligned}
\mathbf{Z}^{(k+1)} & =(1-\beta) \hat{\tilde{\mathbf{A}}} \mathbf{Z}^{(k)}+\beta \mathbf{H}, \\
\mathbf{Z}^{(K)} & =\operatorname{softmax}\left((1-\beta) \hat{\tilde{\mathbf{A}}} \mathbf{Z}^{(K-1)}+\beta \mathbf{H}\right),
\end{aligned}
\end{equation}
where $\beta$ is a hyperparameter to control the trade-off of the initial connection. 

Note that we do not employ a linear layer at the end of the model architecture as it can negatively impact the OOD generalization (Section~\ref{sec:lin}). As a consequence, our model combines both the decoupled architecture and attention mechanism, creating an elegant fusion of the strengths of the two preceding components.

\vskip 0.3em
\noindent
\textbf{Complexity analysis.} In the following, we will show that the computational complexity of our model is at the same level as GCN. We define the adjacency matrix as $\mathbf{A} \in \mathbb{R}^{N \times N}$, the input as $\mathbf{X} \in \mathbb{R}^{N \times d}$, and the transformation matrix as $\mathbf{W} \in \mathbb{R}^{d \times d}$. The operations of the GCN layer result in the following time complexity: $O\left(N d^2+N^2 d\right)$. Again noting that each multiplication with $\mathbf{A}$ is a sparse multiplication, we have $O\left(L|E| d^2+L N^2 d\right)$. $|E|$ represents the number of edges, $L$ represents the number of layers. The APPNP has the same computational complexity as GCN.

For self-attention, we need to compute $\mathbf{X}\mathbf{W}_Q$, $\mathbf{X} \mathbf{W}_K$, and $\mathbf{X} \mathbf{W}_V$, which each take $\mathrm{O}\left(N d^2\right)$ time. Unlike in the GCN case, we also now need to compute $\mathbf{Q} \mathbf{K}^{\top}$ in order to get the attention score $\boldsymbol{\alpha}$. This operation takes $\mathrm{O}\left(N^2 d\right)$ time. Finally, computing $\alpha \mathbf{V}$ takes $\mathrm{O}\left(N^2 d\right)$ time. These computations result in a time complexity of $\mathrm{O}\left(N^2 d+N d^2\right)$. All of these operations are computed at each layer, leading to the final time complexity of $\mathrm{O}\left(L N^2 d+L N d^2\right)$. Hence, the computational complexity of our model is 
$O(L|E| d^2+L N^2 d)$, which is at the same level of complexity as GCN.

\vskip 0.3em
\noindent\textbf{Advantages.} Despite its simplicity, our model stands out by offering several compelling advantages:

(a) \textit{Simple yet robust.} \method{} is grounded in the findings of our prior experimental study and theoretical analysis outlined in Section~\ref{sec:ana}. It enjoys the strengths of essential components that positively contribute to OOD generalization. 

(b) \textit{Favorable computational efficiency.} The efficiency analysis above reveals that the computational complexity of \method is comparable to that of traditional GNNs, yet it demonstrates superior OOD generalization.

(c) \textit{Compatible with diverse training strategies.} \method{} is orthogonal to external OOD techniques and can achieve further OOD generalization from OOD training strategies. In the following sections, we will empirically verify that this model can function as a powerful backbone for various popular OOD algorithms.

\subsection{Experiment} 
\begin{table*}[]
\caption{OOD and GAP performances under ERM setting on datasets from GOOD.
All results are averages over 10 random runs.}
\vskip -1em
\label{table:ERM}
\resizebox{0.95\textwidth}{!}{%
\begin{tabular}{@{}cll|cc|cc|cc|cc|cc|cc@{}}
\toprule
\multicolumn{1}{l}{} &  &  &\multicolumn{2}{c}{G-Cora-Word} & \multicolumn{2}{c}{G-Cora-Degree} & \multicolumn{2}{c}{G-Arxiv-Time} & \multicolumn{2}{c}{G-Arxiv-Degree} & \multicolumn{2}{c}{G-Twitch-Language} & \multicolumn{2}{c}{G-WebKB-University}
\\ \cmidrule(l){2-15} 
\multicolumn{1}{c}{} & & & OOD$\uparrow$ & GAP$\downarrow$ & OOD$\uparrow$ & GAP$\downarrow$ & OOD$\uparrow$ & GAP$\downarrow$ & OOD$\uparrow$ & GAP$\downarrow$ & OOD$\uparrow$ & GAP$\downarrow$ & OOD$\uparrow$ & GAP$\downarrow$ \\ \midrule
\multirow{7}{*}{Covariate} & GCN-- &  & 66.56 & \textbf{5.47} & 58.09 & 16.22 & 70.82 & 2.52 & 58.95 & 19.15 & 51.86 & 22.55 &- & -  \\ 
 & SGC & \textbf{} & 66.23 & 5.51 & 55.86 & 17.98 & 70.27 & 1.87 & 59.36 & 17.96 & 53.19 & 16.26 & - & - \\ 
 & APPNP &  & 67.62 & 6.28 & 58.87 & 16.66 & 69.2 & 2.43 & 56.25 & 20.74 & 56.85 & 16.67 & - & - \\ 
 & GAT & \textbf{} & 65.84 & 5.62 & 58.25 & 15.66 & 70.65 & 2.38 & 58.45 & 19.78 & 53.3 & 20.63 & \textbf{-} & - \\ 
 & GraphSAGE &  & 65.59 & 7.67 & 56.54 & 19.24 & 69.36 & 3.00 & 57.59 & 19.67 & 55.88 & 17.75 & - & - \\ 
 & GPRGNN &  & 67.59 & 6.44 & 59.46 & 15.75 & 67.74 & 2.66 & 56.68 & 19.02 & 56.37 & 16.05 & - & - \\  
 & \cellcolor{Gray}\method{} &\cellcolor{Gray}  &\cellcolor{Gray}\textbf{67.67} &\cellcolor{Gray}6.09 &\cellcolor{Gray}\textbf{59.68} &\cellcolor{Gray}\textbf{15.02} &\cellcolor{Gray}\textbf{71.33} &\cellcolor{Gray}\textbf{1.65} &\cellcolor{Gray}\textbf{60.12} &\cellcolor{Gray}\textbf{17.56} &\cellcolor{Gray}\textbf{57.37} &\cellcolor{Gray}\textbf{15.31} &\cellcolor{Gray}- &\cellcolor{Gray}- \\ \midrule
\multirow{7}{*}{Concept} & GCN-- &  & 66.70 & 2.31 & 64.72 & 4.27 & 63.27 & 12.78 & 62.46 & 13.47 & 44.72 & 39.65 & 27.80 & 34.54 \\ 
 & SGC &  & 66.28 & \textbf{1.96} & 62.58 & 7.12 & 63.33 & 11.61 & 55.06 & 21.28 & 47.43 & 35.31 & 29.63 & 33.37 \\ 
 & APPNP &  & 67.31 & 4.12 & 66.3 & 4.83 & 63.64 & 10.76 & 63.92 & 8.83 & 48.1 & 34.08 & 26.88 & 44.45 \\ 
 & GAT &  & 66.32 & 2.05 & 64.41 & \textbf{4.10} & 65.02 & 10.86 & 64.75 & 9.00 & 43.95 & 40.89 & 28.35 & \textbf{32.65} \\ 
 & GraphSAGE &  & 65.42 & 5.46 & 65.06 & 5.43 & 62.85 & 11.77 & 60.92 & 12.96 & 45.51 & 39.93 & \textbf{34.50} & 40.67 \\ 
 & GPRGNN &  & 66.95 & 3.59 & 65.97 & 4.96 & 61.89 & 10.96 & 63.05 & 8.06 & \textbf{49.07} & \textbf{33.08} & 27.06 & 40.27 \\  
& \cellcolor{Gray}\method{} &\cellcolor{Gray}  &\cellcolor{Gray}\textbf{67.50} &\cellcolor{Gray}3.57 &\cellcolor{Gray}\textbf{66.36} &\cellcolor{Gray}4.45 &\cellcolor{Gray}\textbf{65.11} &\cellcolor{Gray}\textbf{10.17} &\cellcolor{Gray}\textbf{65.86} &\cellcolor{Gray} \textbf{7.73} &\cellcolor{Gray}45.10 &\cellcolor{Gray}40.20 &\cellcolor{Gray}33.57 &\cellcolor{Gray}37.92 \\ \bottomrule
\end{tabular}
}
\end{table*}

\begin{table*}[ht]
\centering
\caption{OOD performances under ERM and EERM on datasets from EERM paper.
All results are averages over 5 random runs.}
\vskip -1em
\label{table:ERM1}
\scalebox{1.05}{
\small
\begin{tabular}{cccccccccc}
\hline
Dataset & 
Method & GCN-- & SGC & APPNP & GAT & GraphSAGE & GPRGNN & \textbf{\method} \\ \hline
Amz-Photo & 
ERM & 93.79±0.97 & 93.83±2.30 & 94.44±0.29 & 96.30±0.79 & 95.09±0.60 & 91.87±0.65 & \cellcolor{Gray}\textbf{96.56±0.85} \\
Cora &
ERM & 91.59±1.44 & 92.17±2.38 & 95.16±1.06 & 94.81±1.28 & 99.67±0.14 & 93.00±2.17 &\cellcolor{Gray}\textbf{99.68±0.06} \\
Elliptic &
ERM & 50.90±1.51 & 49.19±1.89 & 62.17±1.78 & 65.36±2.70 & 56.12±4.47 & 64.59±3.52 &\cellcolor{Gray}\textbf{73.09±2.14} \\
OGB-Arxiv &
ERM & 38.59±1.35 & 41.44±1.49 & 44.84±1.43 & 40.63±1.57 & 39.56±1.66 & 44.38±0.59 &\cellcolor{Gray}\textbf{45.95±0.65} \\
Twitch-E &
ERM & 59.89±0.50 & 59.61±0.68 & 61.05±0.89 & 58.53±1.00 & 62.06±0.09 & 59.72±0.40 & \cellcolor{Gray}\textbf{62.14±0.23} \\ \hline
Amz-Photo &
EERM & 94.05±0.40 & 92.21±1.10 & 92.47±1.04 & 95.57±1.32 & \textbf{95.57±0.13} & 90.78±0.52 & \cellcolor{Gray}92.54±0.77 \\
Cora &
EERM & 87.21±0.53 & 79.15±6.55 & 94.21±0.38 & 85.00±0.96 & 98.77±0.14 & 88.82±3.10 &\cellcolor{Gray}\textbf{98.85±0.26} \\
Elliptic &
EERM & 53.96±0.65 & 45.37±0.82 & 58.80±0.67 & 58.14±4.71 & 58.20±3.55 & 67.27±0.98 & \cellcolor{Gray}{\textbf{68.74±1.12}} \\
OGB-Arxiv &
EERM & OOM & OOM & OOM & OOM & OOM & OOM & \cellcolor{Gray}OOM \\
Twitch-E &
EERM & 59.85±0.85 & 54.48±3.07 & 62.28±0.14 & 59.84±0.71 & 62.11±0.12 & 61.57±0.12 &  \cellcolor{Gray}{\textbf{62.52±0.09}} \\ \hline
\end{tabular}%
}
\vspace{-0.2cm}
\end{table*}


To assess the OOD generalization capabilities of the proposed \method, we conduct experiments under various training strategies on node classification tasks. Through experiments, we aimed to answer the following questions: 
\textbf{Q1}: Can \method{} outperform existing GNN architectures on OOD test data? \textbf{Q2}: Is \method{} a better backbone model in different OOD generalization methods? 


\subsubsection{OOD performance of \method{}}
To answer \textbf{Q1}, we evaluate the performance of our proposed \method{} on ERM setting. 

\noindent
\textbf{Baselines.}
We evaluate the performance of our \method{} by comparing it with several state-of-the-art models, including GCN, APPNP, GAT, SGC, GPRGNN~\cite{chien2021adaptive}, and GraphSAGE~\cite{hamilton2017inductive}.

\noindent
\textbf{Datasets.}
In this experiment, we selected 11 datasets from the GOOD benchmark, the same as introduced in Section~\ref{sec:ana}.
In addition, we conducted experiments on 5 new datasets that are used in EERM~\cite{eerm} paper. 
These datasets exhibit diverse distribution shifts: Cora and Amz-Photo involve synthetic spurious features; Twitch-E exhibits cross-domain transfer with distinct domains for each graph; Elliptic and OGB-Arxiv represent temporal evolution datasets, showcasing dynamic changes and temporal distribution shifts. The details of these datasets are shown in Appendix~\ref{sec:data}

\noindent
\textbf{Implementation Details.}
In Section~\ref{sec:lin}, we observed that using linear transformation as the final prediction layer degrades the OOD performance. Consequently, we replace the linear prediction layer with an individual graph convolutional layer for all the models. Similar to our prior experiments, 
we train and select model hyperparameters on IID distribution and test the model on OOD distribution. 
The number of layers is chosen from $\{2,3\}$. The hidden size is chosen from $\{100,200,300\}$. We tune the following hyper-parameters: $\gamma \in\{0, 0.2, 0.5\}$, $\beta \in\{0, 0.1, 0.2, 0.5\}$. The details of parameters are shown in Appendix~\ref{sec:hypsec3}.

\noindent
\textbf{Experimental Results.}
The results on GOOD datasets are reported in Table~\ref{table:ERM}. From this table, we find that \method{} outperforms baselines on 9/11 datasets for the OOD
test. Meanwhile, \method{} exhibits a lower GAP value compared to baselines on 6 out of 11 datasets. For example, \method{} delivers an improvement of 1.7\% over baselines for OOD test while achieving a decline of 4.1\% over baselines for GAP on GOODArxiv-degree-concept, which indicates the effectiveness of our model for Graph OOD generalization.
The results on datasets from EERM paper under ERM setting are reported in Table~\ref{table:ERM1}. Remarkably, our proposed \method{} outperforms baseline GNN backbones on all of these datasets.

To further validate the contribution of each component in \method{} and the robustness of \method{} to the choice of hyperparameters (i.e., $\gamma$ and $\beta$), we conduct additional ablation study and hyperparameter study. The results confirm the effectiveness of our proposed method and are shown in Appendix~\ref{sec:abl}.




\subsubsection{\method{} Performance as a Backbone}
In order to answer the $\textbf{Q2}$, we conduct experiments to evaluate our model and baselines across various strategies proposed for OOD.
Specifically, we choose GCN-- and APPNP that perform well under ERM as the backbones. 
Representative OOD algorithms such as IRM~\cite{arjovsky2019invariant}, VREx~\cite{krueger2021out}, GroupDRO~\cite{sagawa2019distributionally}, Graph-Mixup~\cite{wang2021mixup} and EERM~\cite{eerm} are considered as baseline methods. Among them, Graph-Mixup and EERM are graph-specific methods.
It's worth noting that within the GOOD framework, Graph-Mixup is equipped with GCN incorporating a linear classifier, and Graph-Mixup is more suitable for this framework in APPNP, as it is better suited for enhancement at the hidden dimension level rather than the class dimension. Hence, for Graph-Mixup, we have added a linear classifier on top of the models. 

\begin{figure*}[t]
    \begin{subfigure}{0.5\textwidth}
        \centering
        \includegraphics[width=\linewidth]{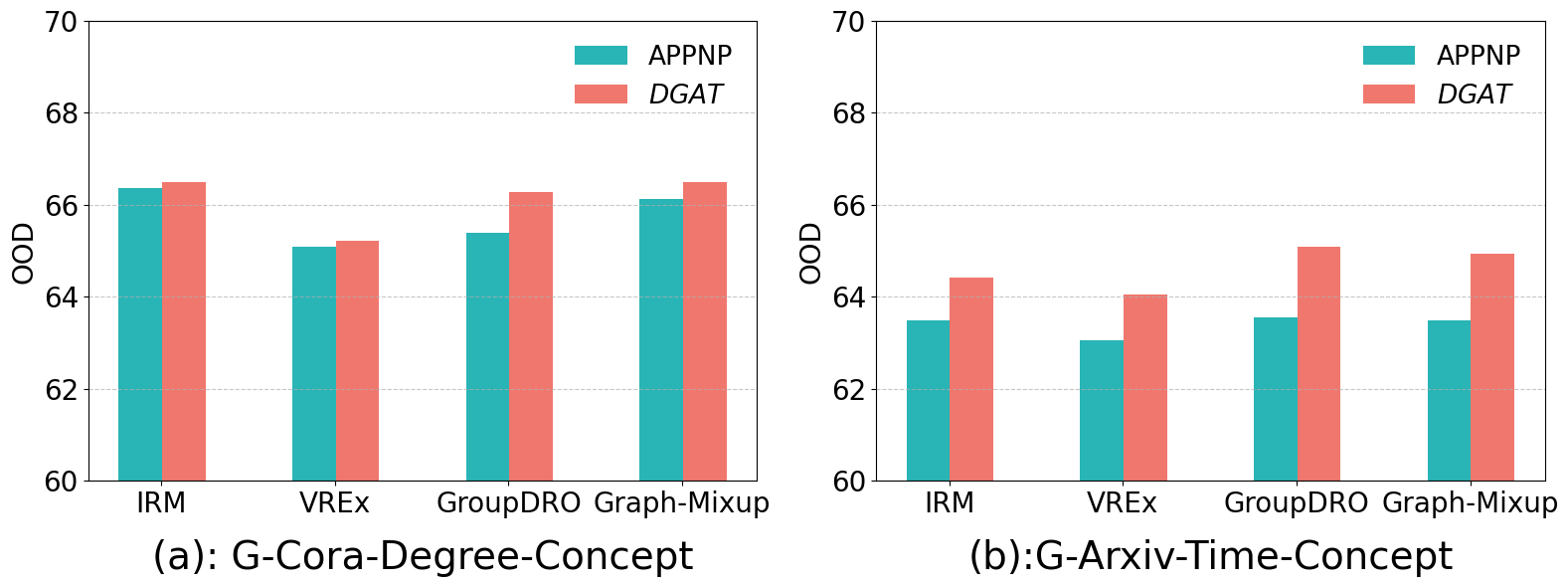}
    \end{subfigure}\hfill
    \begin{subfigure}{0.5\textwidth}
        \centering
    \includegraphics[width=\linewidth]{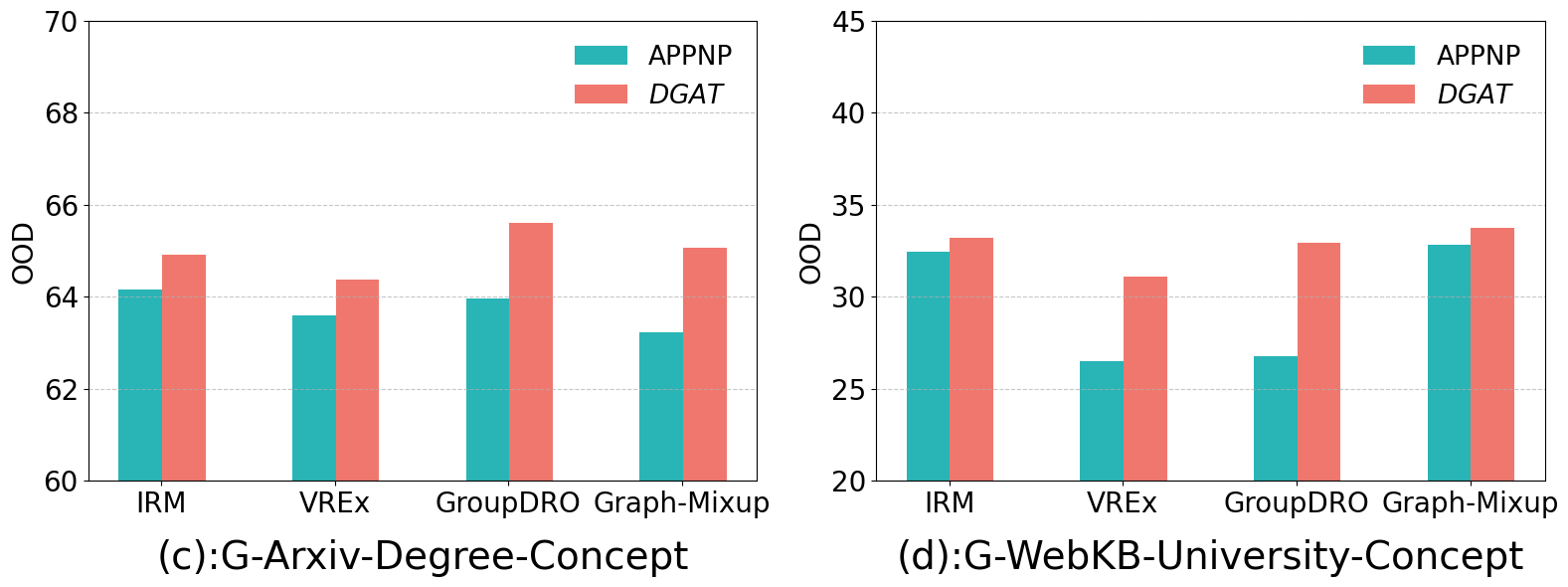}
    \end{subfigure}\hfill
    \vskip -1em
    \caption{Comparision of OOD performance between \method{} and APPNP equipped with various OOD algorithms. Results on more datasets are reported in Appendix~\ref{sec:backbone}.}
    \label{fig:OOD_method}
    \vspace{-0.4cm}
\end{figure*}
The experimental results illustrating the performances related to OOD and GAP across various training strategies (i.e, IRM, VRex, GroupDRO, Graph-Mixup) on datasets from GOOD are shown in Figure~\ref{fig:OOD_method}. The results of OOD performance under EERM setting on datasets from EERM are reported in Table~\ref{table:ERM1}. The results of OOD performance under EERM setting on datasets from GOOD benchmark are reported inTable~\ref{table:ERRMsetting1}, which is presented in Appendix~\ref{sec:backbone}. Results on more datasets across various training strategies are also included in Appendix~\ref{sec:backbone}. We have the following observations.

(a) First, \textbf{compared to the baselines, our \method model consistently demonstrates better OOD generalization when combined with external OOD algorithms.} We find that \method{} outperforms baselines on 9/11 datasets under IRM training strategy. For example, \method{} delivers an improvement of 3.6\% over baselines for OOD test on G-Cora-Degree-Covariate under IRM. Meanwhile, \method{} demonstrates superior OOD performance in comparison to baselines across 9 out of 11 datasets under Graph-Mixup training setting. 
In Table~\ref{table:ERM1} and Table~\ref{table:ERRMsetting1}, our model also achieves better performance on most datasets under EERM training setting. 

(b) Second, \textbf{OOD training algorithms do not always improve the OOD performance of backbone models}. For example, on G-Cora-Word and G-Cora-Degree datasets, all backbone models suffer from OOD performance degradation when trained with EERM algorithm. 
This observation is consistent with the results in GOOD paper~\cite{gui2022good} and highlights the limitation of existing OOD algorithms.

(c) Meanwhile, \textbf{we indeed notice a significant improvement in certain cases.}
For example, on G-Twitch-language-concept, \method{} equipped with VREx achieves an improvement of 7.0\% over ERM. This confirms that combining with an effective training algorithm can further enhance the OOD generalization of our \method backbone model.

\section{Related work}
\subsection{OOD generalization}
In the real world, when the distribution of training data differs from that of testing data, denoted as $P_{tr}(X,Y) \neq P_{te}(X,Y)$, this distribution shift is referred to as an OOD problem. Common types of  distribution shifts including covariate shift, concept shift, and prior shift~\cite{li2023graph}. 
To effectively achieve better OOD generalization, several methods have been proposed~\cite{arjovsky2019invariant,sagawa2019distributionally,krueger2021out,ganin2016domain,sun2016deep}.
For instance, IRM~\cite{arjovsky2019invariant} is a representative method that aims to discover invariant features in which the optimal classifier consistently performs across all environments. 
GroupDRO~\cite{sagawa2019distributionally} improves the model's out-of-distribution (OOD) generalization by optimizing for the worst-case scenario over a set of predefined groups and introducing strong regularization.
VREx~\cite{krueger2021out} considers there exists variation among different training domains, and this variation can be extrapolated to test domains. Based on this assumption, the method aims to make the risks across different training domains as consistent as possible, reducing the model's reliance on spurious features.  Notably, these techniques are primarily tailored for images and may not fully account for the unique attributes of graph data, thus maybe suboptimal for graph OOD generalization.

\subsection{OOD generalization on graphs}
In graph-structured data, the OOD problem exists as well, but the research on graph OOD is currently in its early stages.
The covariate shift and concept shift also exit in the graph domain. 
Unlike the general OOD problem, in graph-based OOD, shifts can occur not only in features but may also occur implicitly in the graph structure.
Some efforts have been proposed to solve the graph OOD problem in node classification tasks from two perspectives: data-based methods and learning-strategy-based methods. Data-based methods focus on manipulating the input graph data to boost OOD generalization~\cite{GTrans,wang2021mixup,li2023graph}. For example, GTrans~\cite{GTrans} provides a data-centric view to solve the graph OOD problem and propose to transform the graph data at test time to enhance the ability of graph OOD generalization.
On the other hand,
learning-strategy-based methods emphasize modifying training
approaches by introducing specialized optimization objectives and
constraints~\cite{eerm,Lisa, liu2023flood}. For example, 
EERM~\cite{eerm} seeks to leverage the invariant associations between features and labels across diverse distributions, thereby achieving demonstrably satisfactory OOD generalization in the face of distribution shifts. 
However, current methods predominantly emphasize external techniques to enhance OOD generalization, yet they do not provide insights into the inherent performance of the underlying backbone models themselves. Therefore, in this work, we investigate the impact of the GNN architecture on graph OOD. Building upon our discoveries, we introduce a novel model aimed at improving the OOD generalization ability on graphs. 

\section{Conclusion}
GNNs tend to yield suboptimal performance on out-of-distribution (OOD) data. While prior efforts have predominantly focused on enhancing graph OOD generalization through data-driven and strategy-based methods, relatively little attention has been devoted to assessing the influence of GNN backbone architectures on OOD generalization. To bridge this gap, we undertake the first comprehensive examination of the OOD generalization capabilities of well-known GNN architectures.
Our investigation unveils that both the attention mechanism and the decoupled architecture positively impact OOD generalization. Conversely, we observe that the linear classification layer tends to compromise OOD generalization ability. To deepen our insights, we provide theoretical analysis and discussions.
Building upon our findings, we introduce a novel GNN design, denoted as \method, which combines the self-attention mechanism and the decoupled architecture. 
Our comprehensive experiments across a variety of training strategies show that the GNN backbone architecture is indeed important, and that combining useful architectural components can lead to a superior GNN backbone architecture for OOD generalization.

\bibliographystyle{ACM-Reference-Format}
\bibliography{sample-base}

\newpage
\appendix
\section{Appendices}


\subsection{Results of T-Test}
\label{app:t-test}
To statistically validate these observations, we further applied
T-Tests to the OOD results of both models on each dataset. The results are shown in Figure~\ref{fig:GAT_GCN} and Figure~\ref{fig: SGC}.
\subsection{Proof of Proposition ~\ref{pro: IB}}
\label{app:theory}
\setcounter{proposition}{0}
\begin{proposition}
Given a node $i$ with its feature vector $x_i$ and its neighborhood $\mathcal{N}(i)$, the following aggregation scheme for obtaining its hidden representation ${\bf z}_i$, \begin{equation}
\mathbf{z}_i = \sum_{j\in\mathcal{N}(i)} \frac{\eta_i \exp([\mathbf{W}_K \mathbf{x}_i]^\top \mathbf{W}_Q \mathbf{x}_j)}{\sum_{j\in\mathcal{N}(i)} \exp([\mathbf{W}_K \mathbf{x}_i]^\top \mathbf{W}_Q \mathbf{x}_j)}\mathbf{x}_j, \nonumber
\end{equation}
with $\eta_i, \mathbf{W}_Q, \mathbf{W}_K$ being the learnable parameters, 
can be understood as an iterative process
to optimize the objective in Eq.~\eqref{eq:ib}.
\end{proposition}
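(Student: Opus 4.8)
The plan is to exhibit the self-attention aggregation in Eq.~\eqref{eqn: SA2IB} as one step of a fixed-point iteration that descends the information-bottleneck objective $I(X,Z) - I(Z,X')$ from Eq.~\eqref{eq:ib}, specializing the construction of \citet{kirsch2020unpacking} to the graph setting where the ``input'' to node $i$ is the multiset of neighbor features $\{\mathbf{x}_j : j \in \mathcal{N}(i)\}$. First I would set up the variational/rate-distortion form of the IB Lagrangian: with $X \sim \mathcal{N}(X',\epsilon)$ Gaussian, the encoder $f(Z\mid X)$ that is optimal for a fixed decoder $q(X'\mid Z)$ has the Boltzmann form $f(z\mid x) \propto p(z)\exp\!\big(\beta \,\mathbb{E}[\log q(X'\mid z)]\big)$ up to the usual self-consistent-equations argument (Tishby–Pereira–Bialek / Blahut–Arimoto). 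Under a Gaussian ansatz for $q(X'\mid Z)$ — justified because the clean signal enters additively through the Gaussian noise model — the log-likelihood term becomes a negative squared distance $-\|z - \mu(x)\|^2/2\sigma^2$, so the optimal responsibilities of candidate ``prototypes'' decay exponentially in squared distance.

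The key step is then to realize the prototypes as the neighbor features $\mathbf{x}_j$ and the distance as a learned bilinear form. Writing the exponent $-\tfrac{1}{2}\|\mathbf{x}_i - \mathbf{x}_j\|_M^2$ in a learned Mahalanobis metric $M$ and expanding, the cross term $\mathbf{x}_i^\top M \mathbf{x}_j$ is exactly the attention logit $[\mathbf{W}_K\mathbf{x}_i]^\top \mathbf{W}_Q\mathbf{x}_j$ once we absorb $M = \mathbf{W}_K^\top \mathbf{W}_Q$; the quadratic terms $\|\mathbf{x}_i\|_M^2$ and $\|\mathbf{x}_j\|_M^2$ either cancel in the softmax normalization (the $\mathbf{x}_i$ term) or can be folded into an (approximately uniform) prior $p(z)$ over prototypes (the $\mathbf{x}_j$ term), which is the standard simplification used to pass from a Gaussian-mixture responsibility to a plain softmax. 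This yields the normalized attention weights in Eq.~\eqref{eqn: SA2IB}. The remaining update — forming $\mathbf{z}_i$ as the attention-weighted average of the $\mathbf{x}_j$, scaled by the learnable $\eta_i$ — is precisely the ``M-step'' that re-estimates the representation given the responsibilities, i.e.\ the conditional mean $\mathbb{E}[\,\mathbf{x}_j \mid \text{responsibilities}\,]$, which is the minimizer of the distortion term; $\eta_i$ plays the role of a per-node temperature/scale (equivalently a trade-off knob on $\beta$) that is learned rather than fixed. Iterating the responsibility step and the averaging step is then a coordinate-descent (Blahut–Arimoto-style) scheme on the IB Lagrangian, so a single GAT-like layer implements one iteration of it.

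I would close by noting the two ways GAT's Eq.~\eqref{eqn: GAT} specializes this: it fixes $\eta_i$ to a constant, and it replaces the bilinear logit $\mathbf{x}_i^\top M \mathbf{x}_j$ by the additive-attention score $\mathbf{a}^\top[\mathbf{W}\mathbf{x}_i \,\|\, \mathbf{W}\mathbf{x}_j]$ passed through a nonlinearity — a reparametrization of the similarity kernel that preserves the ``exponential decay in a learned similarity'' structure, so the IB interpretation carries over heuristically. The main obstacle I anticipate is making the reduction from a genuine Gaussian-mixture responsibility (which carries the $\|\mathbf{x}_j\|_M^2$ terms and a nontrivial prior $p(z)$) down to the clean softmax in Eq.~\eqref{eqn: SA2IB} rigorous rather than heuristic: one must argue that absorbing those terms into the prior and/or into $\eta_i$ does not change the fixed-point structure, and that the Gaussian ansatz for the decoder is without loss in the relevant regime. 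I expect the proof in the appendix to handle this at the level of ``can be understood as'' — i.e.\ to present the iteration and show each attention layer matches one step, treating the ansatz and the prior-absorption as modeling choices rather than proving optimality of the resulting scheme among all encoders.
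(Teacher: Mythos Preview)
Your proposal is correct and matches the paper's proof essentially step for step: the appendix likewise passes to a Gaussian ansatz for the cluster/prototype conditionals, expands the resulting KL into a Mahalanobis quadratic, keeps only the cross term (by assuming a shared covariance and normalized means, i.e.\ exactly your ``absorb the $\|\mathbf{x}_j\|_M^2$ terms into the prior'' move), and reads off the mean-update as the attention-weighted average with $\mathbf{W}_Q = \Sigma^{-1}$, $\mathbf{W}_K\mathbf{x}_c \propto \mu_c$, and $\eta$ coming from $\log p(c)/\det\Sigma$. The only cosmetic difference is that the paper routes through the IB-as-clustering iterative equations of \citet{still2003geometric} and \citet{zhou2022understanding} rather than invoking the Tishby--Pereira--Bialek / Blahut--Arimoto self-consistent equations directly, but the underlying iteration and the identifications are the same.
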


\begin{proof}
Given a distribution $X\sim \mathcal{N}(X^\prime,\epsilon)$ with $X$ as the observed input variable and $X^\prime$ as the clean target variable. Following \cite{kirsch2020unpacking}, the information bottleneck principle involves minimizing the mutual information between the input $X$ and its latent representation $Z$ while still accurately predicting $X^\prime$ from $Z$. In the context of deep learning, the information bottleneck principle can be formulated as the following optimization objective:
\begin{equation}\label{eqn: MI}
f_{\mathrm{IB}}^*(Z \mid X)=\arg \min _{f(Z \mid X)} I(X, Z)-I\left(Z, X^{\prime}\right)
\end{equation}

As demonstrated by \citet{still2003geometric}, we can utilize the information bottleneck to solve the clustering problems, in which nodes will be clustered into clusters with indices $c$.  For simplicity, we take the 1-hop graph of node $u$ to illustrate where the node indices are $1, 2, \ldots, |\mathcal{N}(u)|$. 
Following~\citet{strouse2019information}, we assume that $p(i) = \frac{1}{n}$ with $n=|\mathcal{N}(u)|$ and $p(\mathbf{x}|i) \propto \exp{[-\frac{1}{2\epsilon^2}||\mathbf{x} - \mathbf{x}_i||^2]} $ with the introduction of a smoothing parameter $\epsilon$. 

We denote $p_t$ as the probability distribution after the $t$-th iteration, and the iterative equation is given by \citep{zhou2022understanding}:
\begin{align}
\label{eqn:iterative}
p_t(c|i) & = \frac{\log p_{t-1}(c)}{Z(i)} \exp \left[- D_\text{KL}\left[p(\mathbf{x}|i) | p_{t-1}(\mathbf{x}|c)\right]\right]  \\
p_{t}(c) & = \frac{n_{t}^{(c)}}{n} \\
p_{t}(\mathbf{x}|c) & = \frac{1}{n_{t}^{(c)}} \sum_{i \in S_{t}^{(c)}} p(\mathbf{x}|i),
\end{align}
where $Z(i)$ ensures normalization, $S_{t}^{(c)}$ represents the set of node indices in cluster $c$, and $n_t^{(c)}=|S_{t}^{(c)}|$ is the number of nodes assigned to cluster $c$. Then, we can approximate $p_{t-1}(\mathbf{x}|c)$ using a Gaussian distribution $q_{t-1}(\mathbf{x}|c)\sim \mathcal{N}(\mu_{t-1}^{(c)},\Sigma_{t-1}^{(c)})$. When the value of $\epsilon$ w.r.t. $p(\mathbf{x}|i)$ is sufficiently small, we have:
\begin{equation}\label{enq: KL}
\begin{split}
D_\text{KL}\left[p(\mathbf{x}|i) | q_{t-1}(\mathbf{x}|c)\right] \propto 
& [\mu_{t-1}^{(c)}-\mathbf{x}_i]^\top [\Sigma_{t-1}^{(c)}]^{-1}[\mu_{t-1}^{(c)}-\mathbf{x}_i] \\
& + \log \det\Sigma_{t-1}^{(c)} + B, 
\end{split}
\end{equation}
where $B$ represents terms that are independent of the assignment of data points to clusters and are consequently irrelevant to the objective. By substituting Eq.~\eqref{enq: KL} back into Eq.~\eqref{eqn:iterative}, we can reformulate the cluster update as follows:
\begin{align}
\begin{split}
p_t(c|i) & = \frac{\log p_{t-1}(c)}{Z(i)} \frac{\exp \left[- [\mu_{t-1}^{(c)}-\mathbf{x}_i]^\top [\Sigma_{t-1}^{(c)}]^{-1}[\mu_{t-1}^{(c)}-\mathbf{x}_i] \right]}{\det\Sigma_{t-1}^{(c)}}  \\
& = \frac{\log p_{t-1}(c)}{\det\Sigma_{t-1}^{(c)}} \frac{\exp \left[- [\mu_{t-1}^{(c)}-\mathbf{x}_i]^\top [\Sigma_{t-1}^{(c)}]^{-1}[\mu_{t-1}^{(c)}-\mathbf{x}_i] \right]}{\sum_c\exp \left[- [\mu_{t-1}^{(c)}-\mathbf{x}_i]^\top [\Sigma_{t-1}^{(c)}]^{-1}[\mu_{t-1}^{(c)}-\mathbf{x}_i] \right] }.
\end{split}
\end{align}
To minimize the KL-divergence between $p_{t-1}(\mathbf{x}|c)$ and $ q_{t-1}(\mathbf{x}|c)$, $\mu_c^{(t)}$ will be updated as:
\begin{align}
\begin{split}
    \mu_c^{(t)} & = \frac{1}{n} \sum_{i=1}^{n} p_t(c|i) \mathbf{x}_i \\
    & = \frac{1}{n} \sum_{i=1}^{n}  \frac{\log p_{t-1}(c)}{\det\Sigma_{t-1}^{(c)}} \frac{\exp \left[- [\mu_{t-1}^{(c)}-\mathbf{x}_i]^\top [\Sigma_{t-1}^{(c)}]^{-1}[\mu_{t-1}^{(c)}-\mathbf{x}_i] \right]}{\sum_c\exp \left[- [\mu_{t-1}^{(c)}-\mathbf{x}_i]^\top [\Sigma_{t-1}^{(c)}]^{-1}[\mu_{t-1}^{(c)}-\mathbf{x}_i] \right] } \mathbf{x}_i \\
    & =  \sum_{i=1}^{n}  \frac{\log p_{t-1}(c)}{n\det\Sigma_{t-1}}  \frac{\exp \left[2\cdot [\mu_{t-1}^{(c)}]^\top \Sigma_{t-1}^{-1}\mathbf{x}_i \right]}{\sum_c\exp \left[2\cdot [\mu_{t-1}^{(c)}]^\top \Sigma_{t-1}^{-1}\mathbf{x}_i \right]} \mathbf{x}_i,
\end{split}
\end{align}
where the last equation follows from the assumption that $\Sigma_{t-1}^{c}$ is shared among all clusters and $\mu_c$ are normalized w.r.t. $\Sigma_{t-1}^{-1}$. 
Let $\mathbf{z}_c = \mu_c^{(t)}$ ,$\eta_c= \frac{\log p_{t-1}(c)}{n\det\Sigma_{t-1}} $, $2\cdot\mu_{t-1}^{(c)} = \mathbf{W}_{K} \mathbf{x}_c$, $\mathbf{W}_{Q} = \Sigma_{t-1}^{-1}$ and rewrite the subscripts appropriately to obtain:
\begin{equation}
\mathbf{z}_i = \sum_{j\in\mathcal{N}(i)} \frac{\eta_i \exp([\mathbf{W}_K \mathbf{x}_i]^\top \mathbf{W}_Q \mathbf{x}_j)}{\sum_{j\in\mathcal{N}(i)} \exp([\mathbf{W}_K \mathbf{x}_i]^\top \mathbf{W}_Q \mathbf{x}_j)}\mathbf{x}_j, \nonumber
\end{equation}
where $\eta$ indicates attention correction weighting parameters,   $\mathbf{W}_Q$ and $\mathbf{W}_K$ are transformation parameter about input features. 
\end{proof}
This indicates that the graph self-attention mechanism follows the information bottleneck principle. Specifically, $\mu_c^{(t)}$ refers to the data distribution learned by the information bottleneck, and $\mathbf{z}_c$ is learned by the self-attention mechanism. 





\subsection{Dataset Details} \label{sec:data}
We majorly leverage the datasets from GOOD~\cite{gui2022good} to evaluate the model. For example, GOOD-Cora is a citation network derived from the complete Cora dataset. It involves a small-scale citation network graph where nodes correspond to scientific publications, and edges represent citation links. The objective is a 70-class classification of publication types. Splits are generated based on two domain selections: word and degree. GOOD-Arxiv is a citation dataset adapted from OGB. It features a directed graph representing the citation network among computer science (CS) arXiv papers. Nodes denote arXiv papers, and directed edges signify citations. The task involves predicting the subject area of arXiv CS papers, constituting a 40-class classification challenge. Splits are generated based on two domain selections: time (publication year) and node degree.

Additionally, we use datasets from EERM to evaluate our model. The statistic information are summarised in Table~\ref{tab:eermdata}. These datasets have different types of distribution shift. The shift types for Cora and Amz-Photo are labeled as Artificial
Transformation, Twitch-E is categorized as Cross-Domain Transfers, Elliptic and OGB-Arxiv are identified as Temporal Evolution.
\begin{table}[t]
\centering
\caption{Statistic information for datasets from EERM}
\begin{tabular}{ccccc}
\hline Dataset & \#Nodes & \#Edges & \#Classes \\
\hline Cora  & 2,703 & 5,278 & 10 \\
Amz-Photo & 7,650 & 119,081 & 10 \\
Twitch-E  & $1,912-9,498$ & $31,299-153,138$ & 2 \\
Elliptic & 203,769 & $2,34,355$ & 2 \\
OGB-Arxiv & 169,343 & $1,166,243$ & 40 \\
\hline
\end{tabular}
\label{tab:eermdata}
\end{table}\label{sec:eermdata1}


\subsection{Hyperparameter Selection} 
\begin{table}[t]
\centering
\caption{Parameter Searching Space}
\resizebox{0.4\textwidth}{!}{
\begin{tabular}{|c|c|}
\hline
learning rate (lr) & $[5e-3, 1e-3]$ \\

dropout & $[0.1, 0.2, 0.5]$ \\

hidden & $[100, 200, 300]$ \\

number of model layer & $[1, 2, 3]$ \\
\hline
\end{tabular}}
\label{tab:parameter}
\end{table}\label{sec:hypsec3}
In the investigating experiments, we perform a hyperparameter search to obtain experimental results that can reflect the generalization ability of GCN. We search from a hyperparameter space and obtain the final one. The hyperparameter space is reported in Table~\ref{tab:parameter}. Ultimately, the parameters of each dataset for section 3 are determined as follows:
\begin{itemize}
\item GOODCora-degree-covariate: lr=1e-3, dropout=0.5,\newline hidden=200, model\_layer=2
\item GOODCora-degree-concept: lr=1e-3, dropout=0.5,\newline hidden=200, model\_layer=2
\item GOODCora-word-covariate: lr=1e-3, dropout=0.5,\newline hidden=300, model\_layer=2
\item GOODCora-word-concept: lr=1e-3, dropout=0.5,\newline hidden=300, model\_layer=1
\item GOODArxiv-degree-covariate: lr=1e-3, dropout=0.2,\newline hidden=300, model\_layer=3
\item GOODArxiv-degree-concept: lr=1e-3, dropout=0.2,\newline hidden=300, model\_layer=3
\item GOODArxiv-time-covariate: lr=1e-3, dropout=0.2,\newline hidden=300, model\_layer=3
\item GOODArxiv-time-concept: lr=1e-3, dropout=0.2,\newline hidden=300, model\_layer=3
\item GOODTwitch-language-covariate: lr=1e-3, dropout=0.5,\newline hidden=200, model\_layer=2
\item GOODTwitch-language-concept: lr=1e-3, dropout=0.5,\newline hidden=300, model\_layer=3
\item GOODWebKB-university-concept: lr=5e-3, dropout=0.5,\newline hidden=300, model\_layer=1
\end{itemize}

The other experiment is to compare our proposed model \method{} with other baselines.
We fine-tune the parameters within the following search space: layers (2, 3), dropout (0, 0.1, 0.2, 0.5), hidden (100, 200, 300),  $\gamma$ (0, 0.2, 0.5), $\beta$ (0, 0.1, 0.2, 0.5), learning rate (1e-3, 5e-2, 5e-3), heads (2, 4). Taking the GOODCora-degree-Concept dataset as an example, the parameters used for GCN are: layers: 2, learning rate: 5e-3, dropout: 0.2, hidden: 300; the parameters used for GAT are: layers: 2, learning rate: 5e-3, dropout: 0.2, hidden: 300, heads: 2; the parameters used for APPNP are: layers: 2, learning rate: 5e-3, dropout: 0.2, hidden: 300, $\beta$: 0.2; the parameters used for \method{} are: layers: 2, learning rate: 5e-3, dropout: 0.2, hidden: 300, heads: 2, $\gamma$: 0.5, $\beta$: 0.2. The magnitudes of the parameters remain consistent across various models

\subsection{Ablation Study and Hyperparameter Study} \label{sec:abl}
 We conducted the ablation study to analyze the impact of each component in \method. From Table~\ref{tab:abl}, we find that each component contributes positively to the model.

 We also conducted a hyperparameter study on $\gamma$ and $\beta$ to evaluate the impact of the choice of hyperparameters. The experiment is conducted on OGB-Arxiv dataset and results are summarized in Table~\ref{tab:paramstudy}, where we observed that the performance of \method demonstrates robustness to parameter variations.
 
\subsection{~\method{} Performance as a Backbone} \label{sec:backbone}
We conduct experiments to evaluate our model and baselines
across various strategies proposed for OOD. The results of
OOD performance under EERM setting on datasets from GOOD benchmark are reported in Table~\ref{table:ERRMsetting1}. The results of OOD performance under IRM, VREx, GroupDRO, and Graph-Mixup are respectively presented in Table~\ref{table:IRM}, Table~\ref{table:VREx},Table~\ref{table:GroupDRO} and Table~\ref{table:Graph-Mixup}.

\begin{figure*}[t]
    \begin{subfigure}{0.5\textwidth}
        \centering
        \includegraphics[width=\linewidth]{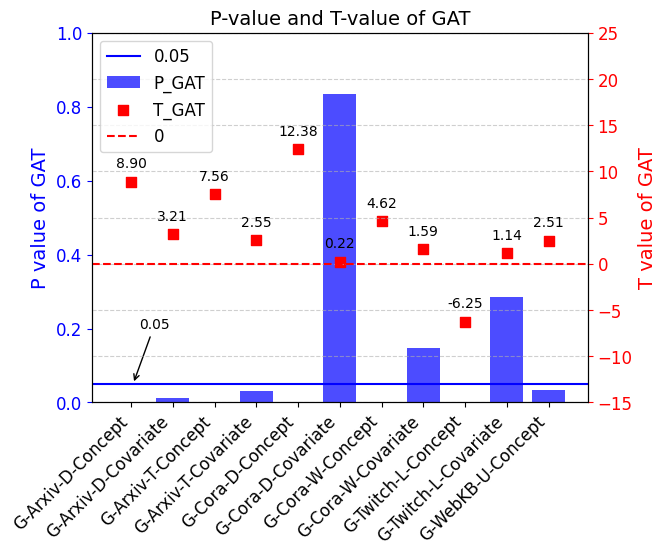}
        \caption{}
    \end{subfigure}\hfill
    \begin{subfigure}{0.5\textwidth}
        \centering
        \includegraphics[width=\linewidth]{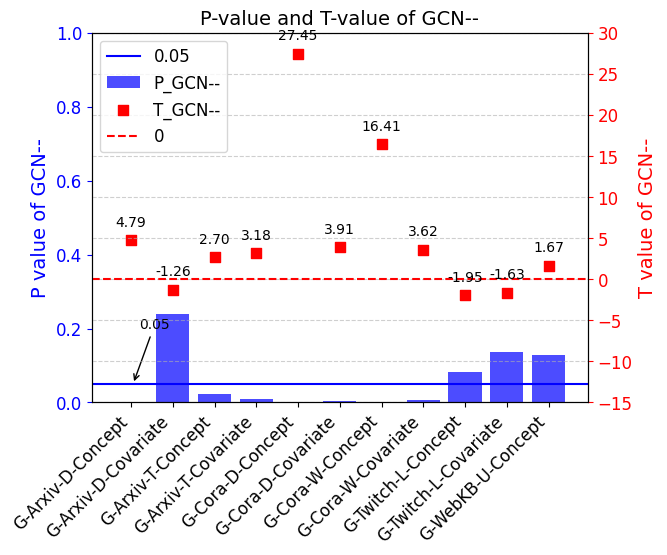}
        \caption{}
    \end{subfigure}\hfill
    \caption{$P_{value}$ and $T_{value}$ of GAT and GCN--}
    \label{fig:GAT_GCN}
\end{figure*}

\begin{figure*}[t]
    \begin{subfigure}{0.5\textwidth}
        \centering
        \includegraphics[width=\linewidth]{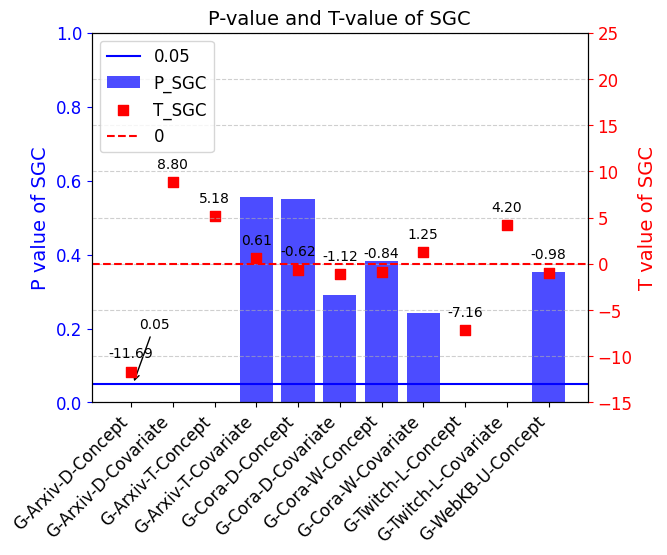}
        \caption{}
    \end{subfigure}\hfill
    \begin{subfigure}{0.5\textwidth}
        \centering
        \includegraphics[width=\linewidth]{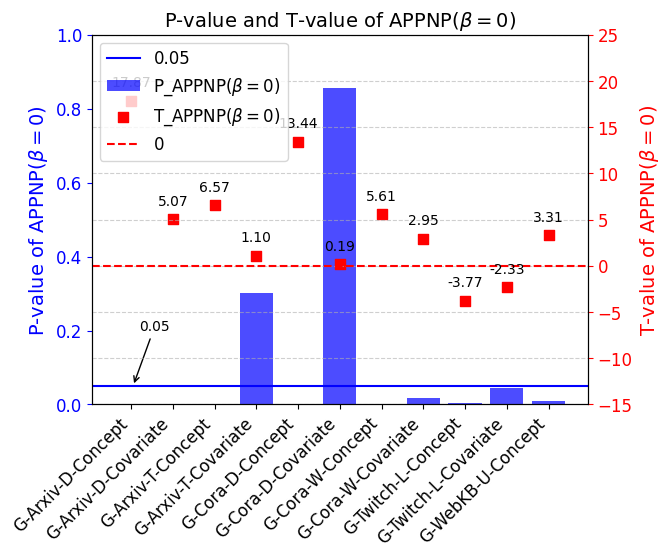}
        \caption{}
    \end{subfigure}\hfill
    \caption{$P_{value}$ and $T_{value}$ of SGC and APPNP($\beta=0$)}
    \label{fig: SGC}
\end{figure*}
\begin{table*}[]
\centering
\caption{Ablation study of \method on representative datasets on ERM setting. All numerical results are averages across 10 random runs.}
\label{tab:abl}
\begin{tabular}{@{}llll@{}}
\toprule                                  & GOOD-Cora-D-Covariate & elliptic            & OGB-Arxiv           \\ \midrule
\method{}                              & $\mathbf{59.68\pm0.46}$ & $\mathbf{73.09\pm2.14}$ & $\mathbf{45.51\pm0.67}$ \\
\method{}  w/o self-attention           & $59.19\pm0.58$          & $69.58\pm2.08$          & $45.29\pm0.83$          \\
\method{}  w/o decouple                 & $58.25\pm0.58$          & $70.26\pm1.46$          & $40.44\pm1.36$          \\
\method{}  w/o remove linear classifier & $57.52\pm1.04$          & $67.94\pm2.63$          & $44.79\pm0.63$          \\ \bottomrule     
\end{tabular}%
\end{table*}

\begin{table*}[]
\centering
\caption{Hyperparameter analysis of \method on OGB-Arxiv on ERM setting. All numerical results are averages across 10 random runs.}
\label{tab:paramstudy}
\begin{tabular}{@{}clllll@{}}
\toprule
\textbf{} &
  \multicolumn{1}{c}{0.1} &
  \multicolumn{1}{c}{0.2} &
  \multicolumn{1}{c}{0.3} &
  \multicolumn{1}{c}{0.4} &
  \multicolumn{1}{c}{0.5} \\ \midrule
\multicolumn{1}{l}{$\gamma$} &
  44.94 ± 0.71 &
  45.44 ± 0.64 &
  45.46 ± 0.54 &
  45.28 ± 0.92 &
  45.13 ± 0.51 \\
\textbf{$\beta$} &
  44.50 ± 0.51 &
  45.51 ± 0.67 &
  45.47 ± 0.91 &
  46.39 ± 0.16 &
  45.78 ± 0.47 \\ \bottomrule
\end{tabular}
\end{table*}

\begin{table*}[]
\caption{OOD performances on GOOD datasets under EERM setting.
All numerical results are averages across 10 random runs.}
\label{table:ERRMsetting1}
\resizebox{\textwidth}{!}{%
\begin{tabular}{@{}cllllllllllllll@{}}
\toprule
\multicolumn{1}{l}{} &  &  &\multicolumn{2}{c}{G-Cora-Word} & \multicolumn{2}{c}{G-Cora-Degree} & \multicolumn{2}{c}{G-Arxiv-Time} & \multicolumn{2}{c}{G-Arxiv-Degree} & \multicolumn{2}{c}{G-Twitch-Language} & \multicolumn{2}{c}{G-WebKB-University} \\ \cmidrule(l){2-15} 
\multicolumn{1}{l}{} & & & Covariate & Concept & Covariate & Concept& Covariate & Concept & Covariate & Concept & Covariate & Concept & Covariate & Concept \\ \midrule
\multirow{7}{*} & GCN-- &  & 65.38 & 65.81 & 58.02 & 16.22 & OOM & OOM & OOM & OOM & OOM & OOM &- & 25.14  \\ 
 & SGC & \textbf{} & OOM & 64.37 & OOM & 60.34 & OOM & OOM & OOM & OOM & OOM & OOM & - & 24.68 \\ 
 & APPNP &  & 64.53 & 65.81 & 57.59 & 64.18 & OOM & OOM & OOM & OOM & OOM & OOM & - & 33.48 \\ 
 & GAT & \textbf{} & 65.37 & 65.36 & 56.62 & 63.70 & OOM & OOM & OOM & OOM & OOM & OOM & \textbf{-} & 26.79 \\ 
 & GPRGNN &  & 64.57 & 65.09 & 58.16 & 64.71 & OOM & OOM & OOM & OOM & OOM & OOM & - & 34.22 \\  
 & \cellcolor{Gray} \method{} &\cellcolor{Gray}  &\cellcolor{Gray}\textbf{65.80} &\cellcolor{Gray}\textbf{65.99} &\cellcolor{Gray}\textbf{58.69} &\cellcolor{Gray}\textbf{65.00} &\cellcolor{Gray}OOM &\cellcolor{Gray}OOM &\cellcolor{Gray}OOM &\cellcolor{Gray}OOM &\cellcolor{Gray}OOM &\cellcolor{Gray}OOM &\cellcolor{Gray}- & \cellcolor{Gray}\textbf{36.06} \\ \midrule
\end{tabular}%
}
\end{table*}

\begin{table*}[]
\caption{OOD and GAP performances  on IRM setting. All numerical results are averages across 10 random runs.}
\label{table:IRM}
\scalebox{0.9}{%
\begin{tabular}{@{}clllllllllllll@{}}
\toprule
\multicolumn{1}{l}{} &  & \multicolumn{2}{c}{G-Cora-Word} & \multicolumn{2}{c}{G-Cora-Degree} & \multicolumn{2}{c}{G-Arxiv-Time} & \multicolumn{2}{c}{G-Arxiv-Degree} & \multicolumn{2}{c}{G-Twitch-Language} & \multicolumn{2}{c}{G-WebKB-University} \\ \midrule
\multicolumn{1}{l}{} & & OOD$\uparrow$ & GAP$\downarrow$ & OOD$\uparrow$ & GAP$\downarrow$ & OOD$\uparrow$ & GAP$\downarrow$ & OOD$\uparrow$ & GAP$\downarrow$ & 
OOD$\uparrow$ & GAP$\downarrow$ &
OOD$\uparrow$ & GAP$\downarrow$ \\ \midrule
\multirow{3}{*}{Covariate} & GCN– & 66.83 & \textbf{5.46} & 58.04 & 16.12 & \textbf{71.01} & 2.25 & 58.90 & 19.20 & 51.71 & 22.66 & - & - \\ 
 & APPNP & 67.18 & 6.52 & 58.31 & 17.15 & 69.03 & 2.56 & 56.20 & 20.87 & 56.64 & 16.81 & - & - \\ 
 &\cellcolor{Gray}\method &\cellcolor{Gray}\textbf{67.62} &\cellcolor{Gray}6.00 &\cellcolor{Gray}\textbf{60.40} &\cellcolor{Gray}\textbf{14.72} &\cellcolor{Gray}70.90 &\cellcolor{Gray}\textbf{1.98} &\cellcolor{Gray}\textbf{59.73} &\cellcolor{Gray}\textbf{17.72} &\cellcolor{Gray}\textbf{57.25} &\cellcolor{Gray}\textbf{15.43} &\cellcolor{Gray}-  &\cellcolor{Gray}- \\ \midrule
\multirow{3}{*}{Concept} & GCN– & 66.76 & \textbf{2.21} & 64.89 & 4.27 & 62.89 & 13.22 & 62.27 & 13.61 & 44.84 & 39.53 & 27.34 & \textbf{33.82} \\ 
 & APPNP & 67.29 & 4.10 & 65.39 & 5.10 & 63.55 & \textbf{10.76} & 63.97 & 8.88 & \textbf{48.13} & \textbf{34.05} & 26.79 & 43.21 \\  
 & \cellcolor{Gray}\method &\cellcolor{Gray}\textbf{67.43} &\cellcolor{Gray}3.45 &\cellcolor{Gray}\textbf{66.29} &\cellcolor{Gray}\textbf{4.22} &\cellcolor{Gray}\textbf{65.10} &\cellcolor{Gray}11.01 &\cellcolor{Gray}\textbf{65.62} &\cellcolor{Gray}\textbf{8.54} &\cellcolor{Gray}45.21 &\cellcolor{Gray}40.08 & \cellcolor{Gray}\textbf{32.93} &\cellcolor{Gray}38.56 \\ \bottomrule
\end{tabular}%
}
\end{table*}

\begin{table*}[]
\caption{OOD and GAP performances on VREx.
All numerical results are averages across 10 random runs.}
\label{table:VREx}
\resizebox{\textwidth}{!}{%
\begin{tabular}{@{}clllllllllllll@{}}
\toprule
\multicolumn{1}{l}{} &  & \multicolumn{2}{c}{G-Cora-Word} & \multicolumn{2}{c}{G-Cora-Degree} & \multicolumn{2}{c}{G-Arxiv-Time} & \multicolumn{2}{c}{G-Arxiv-Degree} & \multicolumn{2}{c}{G-Twitch-Language} & \multicolumn{2}{c}{G-WebKB-University} \\ \cmidrule(l){3-14} 
\multicolumn{1}{l}{} & & OOD$\uparrow$ & GAP$\downarrow$ & OOD$\uparrow$ & GAP$\downarrow$ & OOD$\uparrow$ & GAP$\downarrow$ & OOD$\uparrow$ & GAP$\downarrow$ & OOD$\uparrow$ & GAP$\downarrow$ & OOD$\uparrow$ & GAP$\downarrow$ \\ \midrule
\multirow{3}{*}{Covariate} & GCN– & 66.52 & \textbf{5.6} & 59.08 & 15 & 69.93 & \textbf{1.27} & 58.78 & 19.18 & 51.09 & 21.73 & -& - \\ 
 & APPNP & 67.31 & 6.78 & 60.38 & 15.57 & 70.22 & 2.47 & 56.35 & 20.71 & \textbf{57.85} & \textbf{12.9} & - & - \\ 
 &\cellcolor{Gray}\method &\cellcolor{Gray}\textbf{67.68} & \cellcolor{Gray}6.30 & \cellcolor{Gray}\textbf{60.50} &\cellcolor{Gray}\textbf{14.34} &\cellcolor{Gray}\textbf{71.15} &\cellcolor{Gray}1.53 &\cellcolor{Gray}\textbf{60.44} &\cellcolor{Gray}\textbf{17.38} &\cellcolor{Gray}57.38 &\cellcolor{Gray}14.23 &\cellcolor{Gray}- &\cellcolor{Gray}- \\ \midrule
\multirow{3}{*}{Concept} & GCN– & 66.54 & \textbf{2.45} & 64.78 & 4.36 & 62.78 & 13.26 & 61.86 & 12.5 & 48.89 & 33.79 & 27.70 & \textbf{34.46} \\ 
 & APPNP & 67.36 & 3.97 & 66.14 & 4.96 & 63.50 & 10.84 & 63.23 & 7.96 & \textbf{52.14} & \textbf{23.43} & 32.84 & 38.82 \\ 
 & \cellcolor{Gray}\method &\cellcolor{Gray}\textbf{67.53} &\cellcolor{Gray}3.44 & \cellcolor{Gray}\textbf{66.49} &\cellcolor{Gray}\textbf{4.26} &\cellcolor{Gray}\textbf{64.94} &\cellcolor{Gray}\textbf{10.42} &\cellcolor{Gray}\textbf{65.07} &\cellcolor{Gray}\textbf{7.41} &\cellcolor{Gray}48.27 &\cellcolor{Gray}33.81 &\cellcolor{Gray}\textbf{33.76} &\cellcolor{Gray}38.07 \\ \bottomrule
\end{tabular}%
}
\end{table*}

\begin{table*}[]
\caption{OOD and GAP performances on GroupDRO.
All numerical results are averages across 10 random runs.}
\label{table:GroupDRO}
\resizebox{\textwidth}{!}{%
\begin{tabular}{@{}llllllllllllll@{}}
\toprule
\multicolumn{1}{l}{} &  & \multicolumn{2}{c}{G-Cora-Word} & \multicolumn{2}{c}{G-Cora-Degree} & \multicolumn{2}{c}{G-Arxiv-Time} & \multicolumn{2}{c}{G-Arxiv-Degree} & \multicolumn{2}{c}{G-Twitch-Language} & \multicolumn{2}{c}{G-WebKB-University} \\ \multicolumn{1}{l}{} & & OOD$\uparrow$ & GAP$\downarrow$ & OOD$\uparrow$ & GAP$\downarrow$ & OOD$\uparrow$ & GAP$\downarrow$ & OOD$\uparrow$ & GAP$\downarrow$ & OOD$\uparrow$ & GAP$\downarrow$ & OOD$\uparrow$ & GAP$\downarrow$ \\ \midrule
\multirow{3}{*}{Covariate} & GCN– & 66.73 & \textbf{5.48} & 57.97 & 16.05 & 70.31 & 2.98 & 58.7 & 19.22 & 52.07 & 22.34 & - & - \\ 
 & APPNP & 67.57 & 6.39 & 58.78 & 16.94 & 68.78 & 2.87 & 56.08 & 20.87 & \textbf{57.64} & \textbf{13.66} & - & - \\ 
 & \cellcolor{Gray}\method &\cellcolor{Gray}\textbf{67.65} &\cellcolor{Gray}6.58 &\cellcolor{Gray}\textbf{59.69} &\cellcolor{Gray}\textbf{15.83} &\cellcolor{Gray}\textbf{70.88} &\cellcolor{Gray}\textbf{1.80} &\cellcolor{Gray}\textbf{60.08} &\cellcolor{Gray}\textbf{1.76} &\cellcolor{Gray}57.24 &\cellcolor{Gray}15.35 &\cellcolor{Gray}- &\cellcolor{Gray}- \\ \midrule
\multirow{3}{*}{Concept} & GCN– & 66.30 & \textbf{2.69} & 64.83 & 4.14 & 62.92 & 13.08 & 62.43 & 13.38 & 44.76 & 39.57 & 28.07 & \textbf{34.09} \\ 
 & APPNP & 67.33 & 4.07 & 66.37 & 4.94 & 63.50 & 10.81 & 64.17 & \textbf{8.61} & \textbf{51.31} & \textbf{24.37} & 32.48 & 39.02 \\ 
 & \cellcolor{Gray}\method &\cellcolor{Gray}\cellcolor{Gray}\textbf{67.54} & \cellcolor{Gray}6.45 &\cellcolor{Gray}\textbf{66.49} &\cellcolor{Gray}\textbf{4.39} &\cellcolor{Gray}\textbf{64.42} &\cellcolor{Gray}\textbf{10.60} &\cellcolor{Gray}\textbf{64.92} &\cellcolor{Gray}8.81 &\cellcolor{Gray}44.75 &\cellcolor{Gray}40.65 &\cellcolor{Gray}\textbf{33.19} &\cellcolor{Gray}38.88 \\ \bottomrule
\end{tabular}%
}
\end{table*}


\begin{table*}[]
\caption{OOD and GAP performances on Graph-Mixup.
All numerical results are averages across 10 random runs. }
\label{table:Graph-Mixup}
\resizebox{\textwidth}{!}{%
\begin{tabular}{@{}clllllllllllll@{}}
\toprule
\multicolumn{1}{l}{} &  & \multicolumn{2}{c}{G-Cora-Word} & \multicolumn{2}{c}{G-Cora-Degree} & \multicolumn{2}{c}{G-Arxiv-Time} & \multicolumn{2}{c}{G-Arxiv-Degree} & \multicolumn{2}{c}{G-Twitch-Language} & \multicolumn{2}{c}{G-WebKB-University} \\ \cmidrule(l){3-14} 
\multicolumn{1}{l}{} & & OOD$\uparrow$ & GAP$\downarrow$ & OOD$\uparrow$ & GAP$\downarrow$ & OOD$\uparrow$ & GAP$\downarrow$ & OOD$\uparrow$ & GAP$\downarrow$ & OOD$\uparrow$ & GAP$\downarrow$ & OOD$\uparrow$ & GAP$\downarrow$ \\ \midrule
\multirow{3}{*}{Covariate} & GCN & 64.16 & 8.20 & 55.28 & 18.44 & 67.00 & 2.34 & 53.94 & 20.75 & 57.55 & 14.73 & - & - \\
 & APPNP & 65.12 & 8.43 & 56.91 & 17.55 & 63.04 & 10.95 & 55.36 & 21.33 & \textbf{58.12} & \textbf{13.43} & - & - \\
 &\cellcolor{Gray}\method &\cellcolor{Gray}\textbf{66.00} &\cellcolor{Gray}\textbf{7.65} &\cellcolor{Gray}\textbf{58.10} &\cellcolor{Gray}\textbf{16.59} &\cellcolor{Gray}\textbf{70.11} &\cellcolor{Gray}\textbf{2.16} &\cellcolor{Gray}\textbf{58.03} &\cellcolor{Gray}\textbf{19.50} &\cellcolor{Gray}57.61 &\cellcolor{Gray}15.26 &\cellcolor{Gray}- &\cellcolor{Gray}- \\ \midrule
\multirow{3}{*}{Concept} & GCN & 64.69 & 5.49 & 63.41 & 7.37 & 60.31 & 11.59 & 55.36 & 13.67 & 47.64 & 35.55 & 30.64 & 43.19 \\
 & APPNP & 66.91 & \textbf{4.15} & 65.10 & \textbf{4.87} & 63.05 & 10.95 & 63.61 & \textbf{8.91} & \textbf{51.17} & \textbf{28.07} & 26.51 & 39.82 \\
 & \cellcolor{Gray}\method &\cellcolor{Gray}\textbf{67.11} &\cellcolor{Gray}4.04 &\cellcolor{Gray}\textbf{65.22} &\cellcolor{Gray}5.35 &\cellcolor{Gray}\textbf{64.06} &\cellcolor{Gray}\textbf{10.85} &\cellcolor{Gray}\textbf{64.38} &\cellcolor{Gray}9.32 &\cellcolor{Gray}45.55 &\cellcolor{Gray}38.77 & \cellcolor{Gray}\textbf{31.10} &\cellcolor{Gray}\textbf{39.39} \\ \bottomrule
\end{tabular}%
}
\end{table*}

\end{document}